\DeclareMathOperator*{\argmax}{arg\,max}
\DeclareMathOperator*{\argmin}{arg\,min}
\theoremstyle{plain}
\newtheorem{thm}{Theorem}
\newtheorem{lemma}[thm]{Lemma}
\definecolor{mydarkblue}{rgb}{0,0.08,0.45}
\newcommand{\smoothAdv}{\textsc{SmoothAdv}}
\newcommand{\smoothAdvPGD}{$\smoothAdv_{\mathrm{PGD}}$}
\newcommand{\smoothAdvDDN}{$\smoothAdv_{\mathrm{DDN}}$}
\newcommand{\R}{\mathbb{R}}
\newcommand{\normal}{\mathcal{N}}
\newcommand{\eps}{\epsilon}
\newcommand{\CE}{\mathrm{CE}}
\DeclareMathOperator*{\E}{\mathbb{E}}
\title{
Provably Robust Deep Learning via Adversarially Trained Smoothed Classifiers}
\author{Hadi Salman$^\dagger$,
    Greg Yang$^\mathsection$,
    Jerry Li,
    \\
    \textbf{Pengchuan Zhang\footnotemark[1],$\ $
    Huan Zhang\footnotemark[1],$\ $
    Ilya Razenshteyn\footnotemark[1],$\ $
    S\'{e}bastien Bubeck\!
\thanks{Reverse alphabetical order.
$^\dagger$Work done as part of the \href{https://www.microsoft.com/en-us/research/academic-program/microsoft-ai-residency-program/}{Microsoft AI Residency Program}. $^\mathsection$Primary mentor.}}
\\
Microsoft Research AI\\
\texttt{
    \{hadi.salman,
    gregyang,
    jerrl,}\\
\texttt{
    penzhan,
    t-huzhan,
    ilyaraz,
    sebubeck\}@microsoft.com}
}
\begin{document}

\maketitle

\begin{abstract}
Recent works have shown the effectiveness of \textit{randomized smoothing} as a scalable technique for building neural network-based classifiers that are provably robust to $\ell_2$-norm adversarial perturbations. In this paper, we employ adversarial training to improve the performance of randomized smoothing. We design an adapted attack for smoothed classifiers, and we show how this attack can be used in an adversarial training setting to boost the \textit{provable} robustness of smoothed classifiers.
We demonstrate through extensive experimentation that our method consistently outperforms all existing provably $\ell_2$-robust classifiers by a significant margin on ImageNet and CIFAR-10, establishing the state-of-the-art for provable $\ell_2$-defenses. Moreover, we find that pre-training and semi-supervised learning boost adversarially trained smoothed classifiers even further.
Our code and trained models are available at \url{http://github.com/Hadisalman/smoothing-adversarial}.
\end{abstract}

\vskip -0.4cm
\section{Introduction}
\vskip -0.2cm
Neural networks have been very successful in tasks such as image classification and speech recognition, but have been shown to
be extremely brittle to small, adversarially-chosen perturbations of their inputs \citep{szegedy2013intriguing, goodfellow2014explaining}.
A classifier (e.g., a neural network), which correctly classifies an image $x$, can be fooled by an adversary to misclassify $x+\delta$ where $\delta$ is an adversarial perturbation so small that $x$ and $x + \delta$
are indistinguishable for the human eye.
Recently, many works have proposed heuristic defenses intended to train models robust to such adversarial perturbations.  However, most of these defenses were broken using more powerful adversaries  \citep{carlini2017adversarial, athalye2018obfuscated, uesato2018adversarial}.
This encouraged researchers to develop defenses that lead to \textit{certifiably robust} classifiers, i.e., whose predictions
for most of the test examples $x$ can be verified to be constant within a neighborhood of $x$ \citep{wong2018provable, raghunathan2018certified}.
Unfortunately, these techniques do not immediately scale
to large neural networks that are used in practice.

To mitigate this limitation of prior certifiable defenses, a number of papers \citep{lecuyer2018certified, li2018second, cohen2019certified} consider the \emph{randomized smoothing} approach, which transforms any classifier $f$ (e.g., a neural network) into a new \emph{smoothed} classifier $g$ that has certifiable $\ell_2$-norm robustness guarantees. This transformation works as follows.

Let $f$ be an arbitrary base classifier which maps inputs in $\mathbb{R}^d$ to classes in $\mathcal{Y}$.
Given an input $x$, the smoothed classifier $g(x)$ labels $x$ as having class $c$ which is the most likely to be returned by the base classifier $f$ when fed a noisy corruption $x + \delta$, where $\delta \sim \mathcal{N}(x, \sigma^2 I)$ is a vector sampled according to an isotropic Gaussian distribution.

As shown in~\cite{cohen2019certified}, one can derive certifiable robustness for such smoothed classifiers via the Neyman-Pearson lemma.
They demonstrate that for $\ell_2$ perturbations, randomized smoothing outperforms other certifiably robust classifiers that have been previously proposed.
It is scalable to networks with any architecture and size, which makes it suitable for building robust real-world neural networks.

\paragraph{Our contributions} In this paper, we employ adversarial training to substantially improve on the previous certified robustness results\footnote{Note that we do not provide a new certification method incorporating adversarial training; the improvements that we get are due to the higher quality of our base classifiers as a result of adversarial training.} of randomized smoothing \citep{lecuyer2018certified, li2018second, cohen2019certified}. 
We present, for the first time, a direct attack for smoothed classifiers. 
We then demonstrate how to use this attack to adversarially train smoothed models with not only boosted empirical robustness but also \textbf{substantially improved certifiable robustness} using the certification method of \citet{cohen2019certified}. 

We demonstrate that our method outperforms \textit{all} existing provably $\ell_2$-robust classifiers by a significant margin on ImageNet and CIFAR-10, establishing the state-of-the-art for provable $\ell_2$-defenses. For instance, our Resnet-50 ImageNet classifier achieves \textbf{$56\%$} provable top-1 accuracy (compared to the best previous provable accuracy of $49\%$) under adversarial perturbations with $\ell_2$ norm less than $127/255$.
Similarly, our Resnet-110 CIFAR-10 smoothed classifier achieves up to $16\%$ improvement over previous state-of-the-art,
and by combining our technique with pre-training \cite{hendrycks2019using} and semi-supervised learning \cite{carmon2019unlabeled}, we boost our results to up to $22\%$ improvement over previous state-of-the-art.
Our main results are reported in Tables~\ref{imagenet-certified-accuracy} and \ref{cifar-certified-accuracy} for ImageNet and CIFAR-10. See Tables~\ref{imagenet-certified-accuracy-with-clean-accuracy} and \ref{cifar-certified-accuracy-with-clean-accuracy} in Appendix~\ref{appendix:detialed_certification_results} for the standard accuracies corresponding to these results.

Finally, we provide an alternative, but more concise, proof of the tight robustness guarantee of  \citet{cohen2019certified}  by casting this as a \emph{nonlinear} Lipschitz property of the smoothed classifier. See appendix~\ref{appendix:alternative-proof} for the complete proof.

\begin{table}[t]
\caption{Certified top-1 accuracy of our best ImageNet classifiers at various $\ell_2$ radii.}
\label{imagenet-certified-accuracy}
\begin{center}
\begin{sc}
\begin{tabular}{l| c c c c c c c}
\toprule
$\ell_2$ Radius (ImageNet)& 0.5 & 1.0 & 1.5 & 2.0 & 2.5 & 3.0 & 3.5\\
\midrule
\citet{cohen2019certified} (\%) & 49 & 37 & 29 & 19 & 15 & 12 &9\\
Ours (\%)& \textbf{56} & \textbf{45} & \textbf{38} & \textbf{28} & \textbf{26} & \textbf{20} & \textbf{17}\\
\bottomrule
\end{tabular}
\end{sc}
\end{center}
\end{table}

\begin{table}[t]
\caption{Certified top-1 accuracy of our best CIFAR-10 classifiers at various $\ell_2$ radii.}
\label{cifar-certified-accuracy}
\begin{center}
\begin{sc}
\begin{tabular}{l | c c c c c c c c c}
\toprule
$\ell_2$ Radius (CIFAR-10)& $0.25$& $0.5$& $0.75$& $1.0$& $1.25$& $1.5$& $1.75$& $2.0$& $2.25$\\
\midrule
\citet{cohen2019certified} (\%) &61 & 43 & 32 & 22 & 17 & 13 & 10 & 7 & 4\\
Ours (\%)& 73 & 58 & 48 & 38 & 33 & 29 & 24 & 18 & 16\\
+ Pre-training (\%) & 80 & 62 & \textbf{52} & 38 & \textbf{34} & \textbf{30} & \textbf{25} & \textbf{19} & 16\\
+ Semi-supervision (\%) & 80 & \textbf{63} & \textbf{52} & \textbf{40} & \textbf{34} & 29 & \textbf{25} & \textbf{19} & \textbf{17}\\
+ Both(\%)  & \textbf{81} & \textbf{63} & \textbf{52} & 37 & 33 & 29 & \textbf{25} & 18 & 16\\
\bottomrule
\end{tabular}
\end{sc}
\end{center}
\end{table}

\vskip -0.2cm
\section{Our techniques} 
Here we describe our techniques for adversarial attacks and training on smoothed classifiers.
We first require some background on randomized smoothing classifiers.
For a more detailed description of randomized smoothing, see~\citet{cohen2019certified}.
\subsection{Background on randomized smoothing}\label{sec:randomized-smoothing}
\vskip -0.2cm
Consider a classifier $f$ from $\mathbb{R}^d$ to classes $\mathcal{Y}$. Randomized smoothing is a method that constructs a new, \textit{smoothed} classifier $g$ from the \textit{base} classifier $f$.
The smoothed classifier $g$ assigns to a query point $x$ the class which is most likely to be returned by the base classifier $f$ under isotropic Gaussian noise perturbation of $x$, i.e.,
\begin{align}
g(x) &= \argmax_{c \in \mathcal{Y}} \; \mathbb{P}(f(x+\delta) = c) \label{eq:smoothed-hard} \quad \text{where} \; \delta \sim \mathcal{N}(0, \sigma^2 I) \; .
\end{align}
The noise level $\sigma^2$ is a hyperparameter of the smoothed classifier $g$ which controls a robustness/accuracy tradeoff. Equivalently, this means that $g(x)$ returns the class $c$ whose decision region $\{x' \in \mathbb{R}^d: f(x') = c\}$ has the largest measure under the distribution $\mathcal{N}(x, \sigma^2 I)$.
\citet{cohen2019certified} recently presented a tight robustness guarantee for the smoothed classifier $g$ and gave Monte Carlo algorithms for certifying the robustness of $g$ around $x$ or predicting the class of $x$ using $g$, that succeed with high probability.

\paragraph{Robustness guarantee for smoothed classifiers}
The robustness guarantee presented by \cite{cohen2019certified} uses the Neyman-Pearson lemma, and is as follows:
suppose that when the base classifier $f$ classifies  $\mathcal{N}(x, \sigma^2 I)$, the class $c_A$ is returned with probability  $p_A =  \mathbb{P}(f(x+\delta) = c_A)$, and the ``runner-up'' class $c_B$ is returned with probability $p_B = \max_{c \neq c_A} \mathbb{P}(f(x+\delta) = c)$.
The smoothed classifier $g$ is robust around $x$ within the radius 
\begin{equation}\label{eq:main_bound}
R = \frac{\sigma}{2} \left(\Phi^{-1}(p_A) - \Phi^{-1}(p_B)\right),
\end{equation}
where $\Phi^{-1}$ is the inverse of the standard Gaussian CDF.
It is not clear how to compute $p_A$ and $p_B$ exactly (if $f$ is given by a deep neural network for example).
Monte Carlo sampling is used to estimate some $\underline{p_A}$ and $\overline{p_B}$ for which $\underline{p_A} \le p_A$ and $\overline{p_B} \ge p_B$ with arbitrarily high probability over the samples.
The result of \eqref{eq:main_bound} still holds if we replace $p_A$ with $\underline{p_A}$ and $p_B$ with $\overline{p_B}$. 

This guarantee can in fact be obtained alternatively by explicitly computing the Lipschitz constant of the smoothed classifier, as we do in Appendix~\ref{appendix:alternative-proof}.

\subsection{\smoothAdv: Attacking smoothed classifiers}\label{sec:smooth-adv-primary-derivation}
\vskip -0.2cm

We now describe our attack against smoothed classifiers.
To do so, it will first be useful to describe smoothed classifiers in a more general setting.
Specifically, we consider a generalization of~\eqref{eq:smoothed-hard} to \emph{soft} classifiers, namely, functions $F: \mathbb{R}^d \to P(\mathcal{Y})$, where $P(\mathcal{Y})$ is the set of probability distributions over $\mathcal{Y}$.
Neural networks typically learn such soft classifiers, then use the argmax of the soft classifier as the final hard classifier.
Given a soft classifier $F$, its associated \textit{smoothed} soft classifier $G: \R^n \to P(\mathcal{Y})$ is defined as 
\begin{equation}
G (x) = \left( F * \normal(0, \sigma^2 I) \right) (x) = \E_{\delta \sim \normal(0, \sigma^2 I)} [F(x + \delta)] \; .
\end{equation}

Let $f(x)$ and $F (x)$ denote the hard and soft classifiers learned by the neural network, respectively, and let $g$ and $G$ denote the associated smoothed hard and smoothed soft classifiers.
Directly finding adversarial examples for the smoothed \textit{hard} classifier $g$ is a somewhat ill-behaved problem because of the argmax, so we instead propose to \textit{find adversarial examples for the smoothed soft classifier} $G$.
Empirically we found that doing so will also find good adversarial examples for the smoothed hard classifier.
More concretely, given a labeled data point $(x, y)$, we wish to find a point $\hat x$ which maximizes the loss of $G$ in an $\ell_2$ ball around $x$ for some choice of loss function.
As is canonical in the literature, we focus on the cross entropy loss $\ell_{\CE}$.
Thus, given a labeled data point $(x, y)$ our (ideal) adversarial perturbation is given by the formula:
\begin{align}
    \hat x &= \argmax_{\|x' - x\|_2 \leq \eps} \ell_\CE (G (x'), y) \nonumber \\ 
    &= \argmax_{\|x' - x\|_2 \leq \eps} \left( - \log \E_{\delta \sim \normal (0, \sigma^2 I)} \left[ \left( F (x' + \delta) \right)_y \right] \right) \; . \label{eq:smooth-adv} \tag{$\mathcal{S}$}
\end{align}
We will refer to~\eqref{eq:smooth-adv} as the \smoothAdv{} objective.
The \smoothAdv{} objective is highly non-convex, so as is common in the literature, we will optimize it via projected gradient descent (PGD), and variants thereof.
It is hard to find exact gradients for~\eqref{eq:smooth-adv}, so in practice we must use some estimator based on random Gaussian samples.
There are a number of different natural estimators for the derivative of the objective function in~\eqref{eq:smooth-adv}, and the choice of estimator can dramatically change the performance of the attack.
For more details, see Section~\ref{sec:first-order}.

We note that \eqref{eq:smooth-adv} should not be confused with the similar-looking objective
\begin{align}
    \hat x_{\mathrm{wrong}} 
&= \argmax_{\|x' - x\|_2 \leq \eps} \left( \E_{\delta \sim \normal (0, \sigma^2 I)} \left[ \ell_\CE (F (x' + \delta), y) \right] \right) \nonumber \\
&= \argmax_{\|x' - x\|_2 \leq \eps} \left( \E_{\delta \sim \mathcal{N}(0, \sigma^2 I)} \left[-\log \left( F(x' + \delta) \right)_y\right] \right) \; , \label{eq:wrong-attack}
\end{align}
as suggested in section G.3 of \cite{cohen2019certified}. 
There is a subtle, but very important, distinction between \eqref{eq:smooth-adv} and \eqref{eq:wrong-attack}. 
Conceptually, solving \eqref{eq:wrong-attack} corresponds to finding an adversarial example of $F$ that is robust to Gaussian noise. 
In contrast, \eqref{eq:smooth-adv} is directly attacking the smoothed model i.e. trying  to find adversarial examples that decrease the probability of correct classification of the smoothed soft classifier $G$. 
From this point of view,~\eqref{eq:smooth-adv} is the right optimization problem that should be used to find adversarial examples of $G$. 
This distinction turns out to be crucial in practice: empirically,  \citet{cohen2019certified} found attacks based on~\eqref{eq:wrong-attack} not to be effective.

Interestingly, for a large class of classifiers, including neural networks, one can alternatively derive the objective~\eqref{eq:smooth-adv} from an optimization perspective, by attempting to directly find adversarial examples to the smoothed hard classifier that the neural network provides.
While they ultimately yield the same objective, this perspective may also be enlightening, and so we include it in Appendix~\ref{sec:alt-deriv}.

\subsection{Adversarial training using \smoothAdv} 
\vskip -0.2cm
We now wish to use our new attack to boost the adversarial robustness of smoothed classifiers.
We do so using the well-studied \emph{adversarial training} framework~\citep{kurakin2016adversarial,madry2017towards}.
In adversarial training, given a current set of model weights $w_t$ and a labeled data point $(x_t, y_t)$, one finds an adversarial perturbation $\hat x_t$ of $x_t$ for the current model $w_t$, and then takes a gradient step for the model parameters, evaluated at the point $(\hat x_t, y_t)$.
Intuitively, this encourages the network to learn to minimize the worst-case loss over a neighborhood around the input.

At a high level, we propose to instead do adversarial training using an adversarial example \emph{for the smoothed classifier}.
We combine this with the approach suggested in~\citet{cohen2019certified}, and train at Gaussian perturbations of this adversarial example.
That is, given current set of weights $w_t$ and a labeled data point $(x_t, y_t)$, we find $\hat x_t$ as a solution to~\eqref{eq:smooth-adv}, and then take a gradient step for $w_t$ based at gaussian perturbations of $\hat x_t$.
In contrast to standard adversarial training, we are training the base classifier so that its associated smoothed classifier minimizes worst-case loss in a neighborhood around the current point.
For more details of our implementation, see Section~\ref{sec:adv-training}.
\textit{We emphasize that although we are training using adversarial examples for the smoothed soft classifier, in the end we certify the robustness of the smoothed hard classifier we obtain after training}.

We make two important observations about our method.
First, adversarial training is an empirical defense, and typically offers no provable guarantees.
However, we demonstrate that by combining our formulation of adversarial training with randomized smoothing, we are able to substantially boost the certifiable robust accuracy of our smoothed classifiers.
Thus, while adversarial training using \smoothAdv{} is still ultimately a heuristic, and offers no provable robustness by itself, the smoothed classifier that we obtain using this heuristic has strong certifiable guarantees.

Second, we found empirically that to obtain strong certifiable numbers using randomized smoothing, it is \emph{insufficient} to use standard adversarial training on the \textit{base} classifier.
While such adversarial training does indeed offer good empirical robust accuracy, the resulting classifier is not optimized for randomized smoothing.
In contrast, our method specifically finds base classifiers whose smoothed counterparts are robust.
As a result, the certifiable numbers for standard adversarial training are noticeably worse than those obtained using our method.
See Appendix~\ref{appendix:attack-base-model} for an in-depth comparison.

\section{Implementing \smoothAdv{} via first order methods}
\label{sec:first-order}
As mentioned above, it is difficult to optimize the \smoothAdv{} objective, so we will approximate it via first order methods.
We focus on two such methods: the well-studied \emph{projected gradient descent (PGD)} method \citep{kurakin2016adversarial, madry2017towards}, and the recently proposed \emph{decoupled direction and norm (DDN)} method~\citep{rony2018decoupling} which achieves  $\ell_2$ robust accuracy competitive with PGD on CIFAR-10.

The main task when implementing these methods is to, given a data point $(x, y)$, compute the gradient of the objective function in~\eqref{eq:smooth-adv} with respect to $x'$.
If we let $J(x') = \ell_{CE} (G (x'), y)$ denote the objective function in~\eqref{eq:smooth-adv}, we have
\begin{equation}
    \label{eq:exact-grad}
    \nabla_{x'} J(x') = \nabla_{x'} \left( - \log \E_{\delta \sim \normal(0, \sigma^2 I)} [F (x' + \delta)_y] \right) \; .
\end{equation}
However, it is not clear how to evaluate~\eqref{eq:exact-grad} exactly, as it takes the form of a complicated high dimensional integral.
Therefore, we will use Monte Carlo approximations.
We sample i.i.d. Gaussians $\delta_1, \ldots, \delta_m \sim \normal (0, \sigma^2 I)$, and use the plug-in estimator for the expectation:
\begin{equation}
    \label{eq:plug-in}
    \nabla_{x'} J(x') \approx \nabla_{x'} \left( - \log \left( \frac{1}{m} \sum_{i = 1}^m  F (x' + \delta_i)_y \right) \right) \;.
\end{equation}
It is not hard to see that if $F$ is smooth, this estimator will converge to~\eqref{eq:exact-grad} as we take more samples.
In practice, if we take $m$ samples, then to evaluate~\eqref{eq:plug-in} on all $m$ samples requires evaluating the network $m$ times.
This becomes  expensive for large $m$, especially if we want to plug this into the adversarial training framework, which is already slow.
Thus, when we use this for adversarial training, we use $m_{\mathrm{train}} \in \{1, 2, 4, 8\}$.
When we run this attack to evaluate the \emph{empirical} adversarial accuracy of our models, we use substantially larger choices of $m$, specifically, $m_{\mathrm{test}} \in \{1, 4, 8, 16, 64, 128\}$.
Empirically we found that increasing $m$ beyond $128$ did not substantially improve performance.

While this estimator does converge to the true gradient given enough samples, note that it is not an unbiased estimator for the gradient.
Despite this, we found that using~\eqref{eq:plug-in} performs very well in practice.
Indeed, using~\eqref{eq:plug-in} yields our strongest empirical attacks, as well as our strongest certifiable defenses when we use this attack in adversarial training.
In the remainder of the paper, we let \smoothAdvPGD{} denote the PGD attack with gradient steps given by~\eqref{eq:plug-in}, and similarly we let \smoothAdvDDN{} denote the DDN attack with gradient steps given by~\eqref{eq:plug-in}.

\subsection{An unbiased, gradient free method}\label{sec:gradient-free} 
\vskip -0.2cm
We note that there is an alternative way to optimize~\eqref{eq:smooth-adv} using first order methods.
Notice that the logarithm in~\eqref{eq:smooth-adv} does not change the argmax, and so it suffices to find a minimizer of $G(x')_y$ subject to the $\ell_2$ constraint.
We then observe that
\begin{equation}\label{eq:grad-free}
    \nabla_{x'} (G(x')_y) = \E_{\delta \sim \normal(0, \sigma^2 I)} \left[ \nabla_{x'} F(x' + \delta)_y \right] \stackrel{(a)}{=} \E_{\delta \sim \normal (0, \sigma^2 I)} \left[ \frac{\delta}{\sigma^2} \cdot F(x' + \delta)_y \right] \; .
\end{equation}
The equality (a) is known as \emph{Stein's lemma}~\cite{stein1981estimation}, although we note that something similar can be derived for more general distributions.
There is a natural unbiased estimator for~\eqref{eq:grad-free}: sample i.i.d. gaussians $\delta_1, \ldots, \delta_m \sim \normal (0, \sigma^2 I)$, and form the estimator
$    \nabla_{x'} (G(x')_y) \approx \frac{1}{m} \sum_{i = 1}^m \frac{\delta_i}{\sigma^2} \cdot F (x' + \delta_i)_y \; .$
This estimator has a number of nice properties.
As mentioned previously, it is an unbiased estimator for~\eqref{eq:grad-free}, in contrast to~\eqref{eq:plug-in}.
It also requires no computations of the gradient of $F$; if $F$ is a neural network, this saves both time and memory by not storing preactivations during the forward pass.
Finally, it is very general: the derivation of~\eqref{eq:grad-free} actually holds even if $F$ is a hard classifier (or more precisely, the one-hot embedding of a hard classifier).
In particular, this implies that this technique can even be used to directly find adversarial examples of the smoothed hard classifier.

Despite these appealing features, in practice we find that this attack is quite weak. 
We speculate that this is because the variance of the gradient estimator is too high.
For this reason, in the empirical evaluation we focus on attacks using~\eqref{eq:plug-in}, but we believe that investigating this attack in practice is an interesting direction for future work. See Appendix~\ref{appendix:grad-free} for more details.

\begin{algorithm}[t]
   \caption{1: \textsc{SmoothAdv}-ersarial Training}
   \label{pseudocode-smoothadv}
   \begin{algorithmic}
   \STATE \textbf{function} \textsc{TrainMiniBatch}($(x^{(1)}, y^{(1)})$, $(x^{(2)}, y^{(2)})$, \ldots, $(x^{(B)}, y^{(B)})$)
   \STATE \quad \textsc{Attacker} $\gets$ (\smoothAdvPGD{} or \smoothAdvDDN{})
   \STATE \quad Generate noise samples $\delta_i^{(j)} \sim \mathcal{N}(0, \sigma^2 I)$ for $1 \leq i \leq m$,
   $1 \leq j \leq B$
   \STATE \quad $L \gets []$ {\it \quad \# List of adversarial examples for training}
   \STATE \quad \textbf{for} $1 \leq j \leq B$ \textbf{do}
   \STATE \quad \quad $\hat x^{(j)} \gets x^{(j)}$ {\it \quad \# Adversarial example}
   \STATE \quad \quad \textbf{for} $1 \leq k \leq T$ \textbf{do}
   \STATE \quad \quad \quad Update $\hat x^{(j)}$ according
   to the $k$-th step of \textsc{Attacker},
   where we use 
   \STATE \quad\quad\quad the noise samples $\delta_1^{(j)}$,
   $\delta_2^{(j)}$, \ldots, $\delta_m^{(j)}$ to estimate
   a gradient of the loss of the smoothed
   \STATE \quad\quad\quad model
   according to~(\ref{eq:plug-in})
   \STATE \quad \quad \quad {\it \# We are reusing the same
   noise samples between different steps of the attack}
   \STATE \quad \quad \textbf{end}
   \STATE \quad \quad Append $((\hat x^{(j)} + \delta_1^{(j)}, y^{(j)}), (\hat x^{(j)} + \delta_2^{(j)}, y^{(j)}), \ldots, (\hat x^{(j)} + \delta_m^{(j)}, y^{(j)}))$ to $L$
   \STATE \quad \quad {\it \# Again, we are reusing the same noise samples
   for the augmentation}
   \STATE \quad \textbf{end}
   \STATE \quad Run backpropagation on $L$
   with an appropriate learning rate
\end{algorithmic}
\end{algorithm}

\subsection{Implementing adversarial training for smoothed classifiers}
\label{sec:adv-training}
\vskip -0.2cm
We incorporate adversarial training into the approach of~\citet{cohen2019certified} changing as few moving parts as possible
in order to enable a direct comparison. In particular,
we use the same network architectures, batch size, and learning rate schedule.
For CIFAR-10, we change the number of epochs, but for ImageNet, we leave it the same. We discuss more of these specifics in Appendix~\ref{appendix:experiments-details}, and here we describe how to perform adversarial training on a single mini-batch.
The algorithm is shown in Pseudocode~\ref{pseudocode-smoothadv}, with the following parameters:
$B$ is the mini-batch size, $m$ is the number of noise samples
used for gradient estimation in~(\ref{eq:plug-in}) as 
well as for Gaussian noise data augmentation, and $T$ is the number of steps of an attack\footnote{Note that we are reusing the same noise samples during every step of our attack as well as during augmentation. Intuitively, this helps to stabilize the attack process.}.

 \enlargethispage{\baselineskip}
 
\section{Experiments}
\vskip -0.3cm
We primarily compare with \citet{cohen2019certified} as it was shown to outperform all other scalable provable $\ell_2$ defenses by a wide margin.
As our experiments will demonstrate, our method consistently and significantly outperforms \citet{cohen2019certified} even further, establishing the state-of-the-art for provable $\ell_2$-defenses.
We run experiments on ImageNet \citep{deng2009imagenet} and  CIFAR-10 \citep{krizhevsky2009learning}. We use the same base classifiers $f$ as \citet{cohen2019certified}: a ResNet-50 \citep{he2016deep} on ImageNet, and ResNet-110 on CIFAR-10.
Other than the choice of attack (\smoothAdvPGD\ or \smoothAdvDDN) for adversarial training, our experiments are distinguished based on five main hyperparameters:
\begin{equation}
\small
\begin{aligned}
        \epsilon &= \text{maximum allowed $\ell_2$ perturbation of the input}\\[-4pt]
    T &= \text{number of steps of the attack}\\[-4pt]    
    \sigma &= \text{std. of Gaussian noise data augmentation during training and certification}\\[-4pt]
    m_{\mathrm{train}} &= \text{number of noise samples used to estimate \eqref{eq:plug-in} during training}\\[-4pt]
    m_{\mathrm{test}} &= \text{number of noise samples used to estimate \eqref{eq:plug-in} during evaluation}
\end{aligned}
\tag{$\Diamond$}
\label{eqn:hyper}
\end{equation}

Given a smoothed classifier $g$, we use the same prediction and certification algorithms, \textsc{Predict} and \textsc{Certify}, as \cite{cohen2019certified}.
Both algorithms sample base classifier predictions under Gaussian noise.
\textsc{Predict} outputs the majority vote if the vote count passes a binomial hypothesis test, and abstains otherwise.
\textsc{Certify} certifies the majority vote is robust if the fraction of such votes is higher by a calculated margin than the fraction of the next most popular votes, and abstains otherwise.
For details of these algorithms, we refer the reader to \cite{cohen2019certified}.

The \textbf{certified accuracy} at radius $r$ is defined as the fraction of the test set which $g$  classifies correctly (without abstaining) and certifies robust at an $\ell_2$ radius $r$.
Unless otherwise specified, we use the same $\sigma$ for certification as the one used for training the base classifier $f$. Note that $g$ is a randomized smoothing classifier, so this reported accuracy is \textit{approximate}, but can get arbitrarily close to the \textit{true} certified accuracy as the number of samples of $g$ increases (see \cite{cohen2019certified} for more details).
Similarly, the \textbf{empirical accuracy} is defined as the fraction of the $\ell_2$ \smoothAdv-ersarially attacked test set which $g$ classifies correctly (without abstaining). 
Both \textsc{Predict} and \textsc{Certify} have a parameter $\alpha$ defining the failure rate of these algorithms. 
Throughout the paper, we set $\alpha =0.001$ (similar to \cite{cohen2019certified}), which means there is at most a 0.1\% chance that \textsc{Predict} \textit{does not} return the most probable class under the smoothed classifier $g$, or that \textsc{Certify} falsely certifies a non-robust input.

\begin{figure*}[t]
	\includegraphics[width=0.32\textwidth]{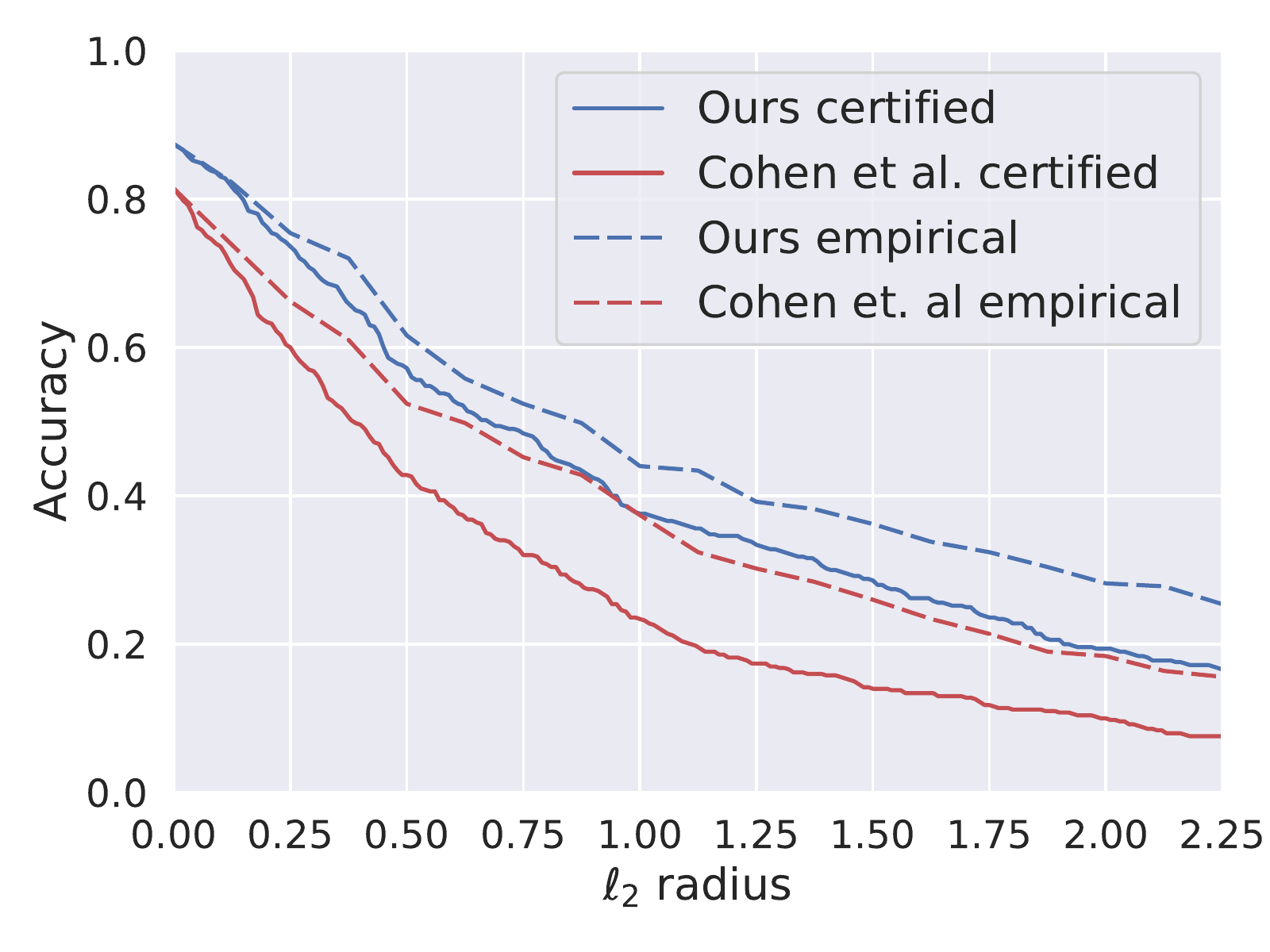}
	\includegraphics[width=0.32\textwidth]{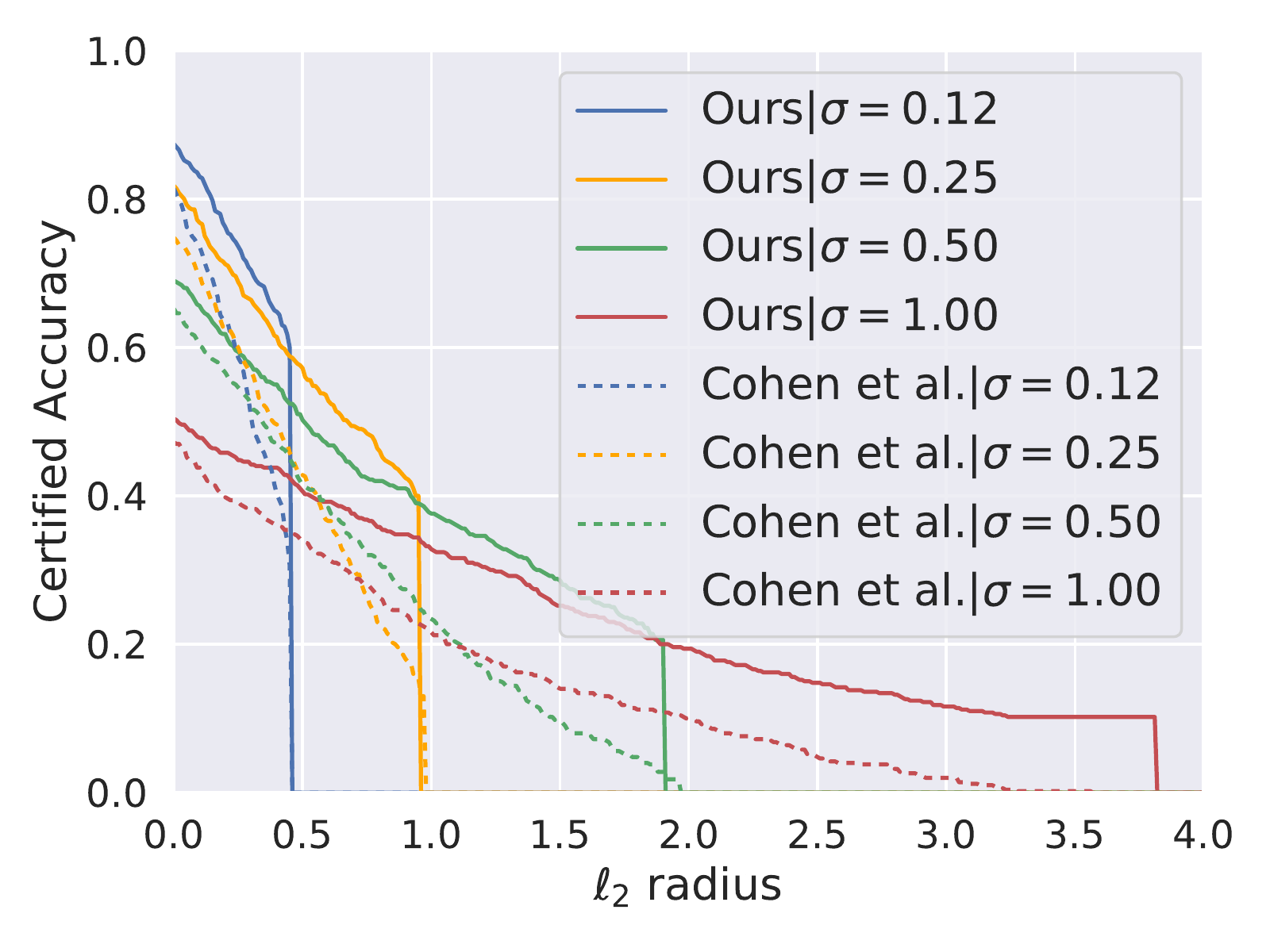}
	\includegraphics[width=0.32\textwidth]{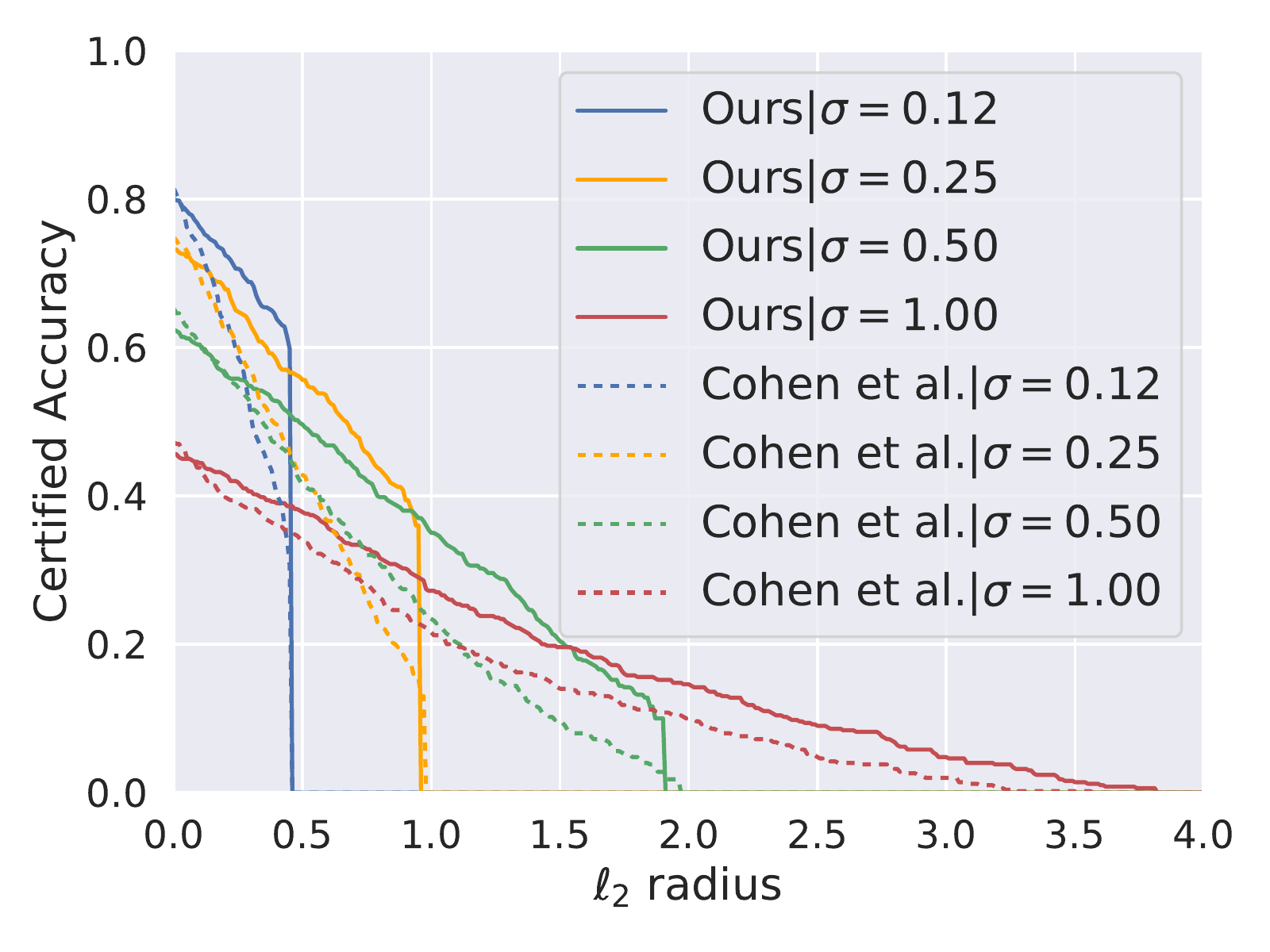}
\caption{Comparing our \smoothAdv-ersarially trained CIFAR-10 classifiers vs \citet{cohen2019certified}.
\textbf{(Left)} Upper envelopes of certified accuracies over all experiments.
\textbf{(Middle)} Upper envelopes of certified accuracies per $\sigma$.
\textbf{(Right)} Certified accuracies of one representative model per $\sigma$.
Details of each model used to generate these plots and their certified accuracies are in Tables~\ref{table:cifar_certify_results_PGD_DDN_2steps_1samples}-\ref{table:cifar_certify_results_PGD_10steps_2/4/8samples} in Appendix~\ref{appendix:detialed_certification_results}.
\label{fig:cifar}}
\end{figure*}

\begin{figure*}[t]
\includegraphics[width=0.32\textwidth]{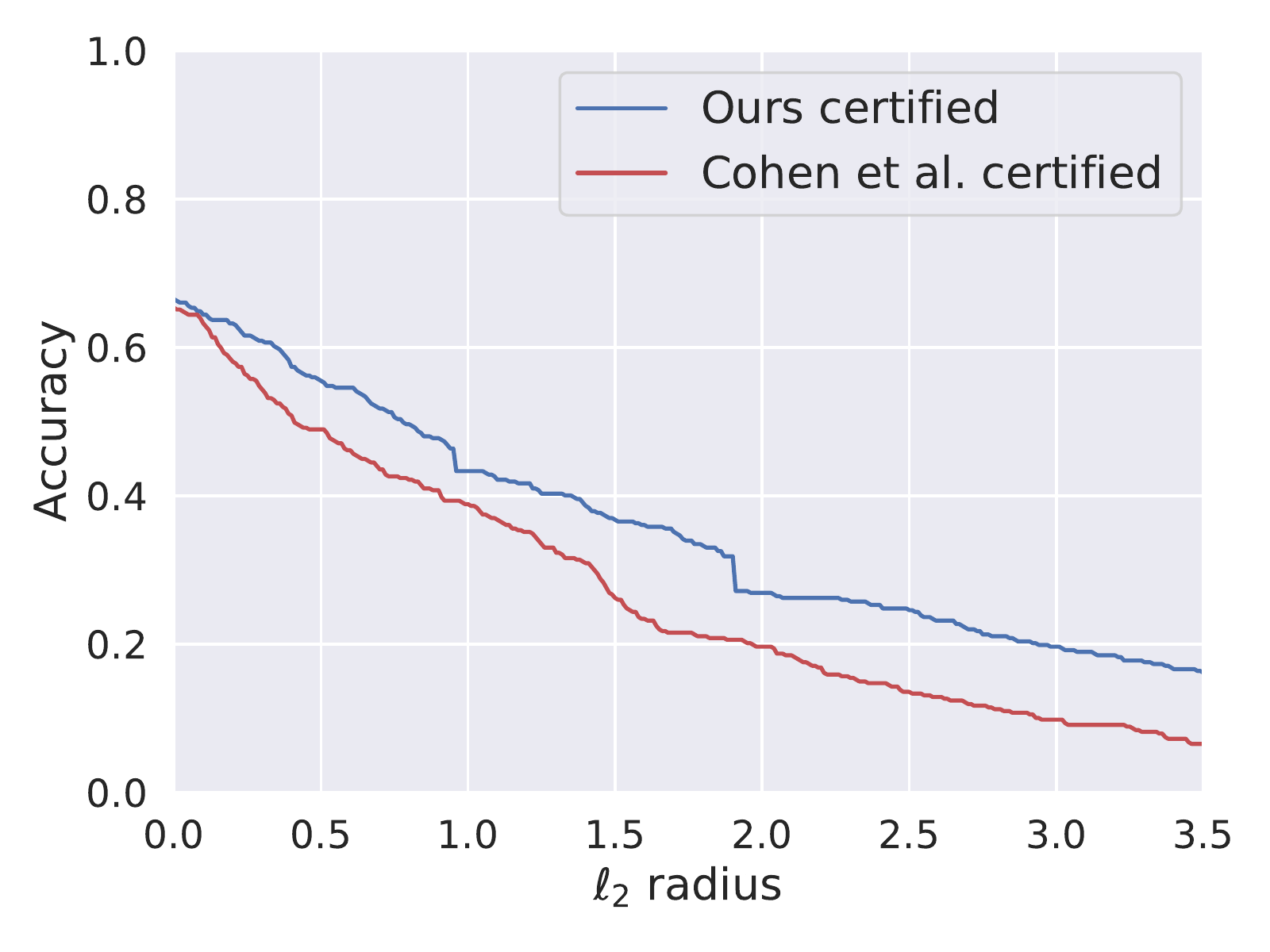}
\includegraphics[width=0.32\textwidth]{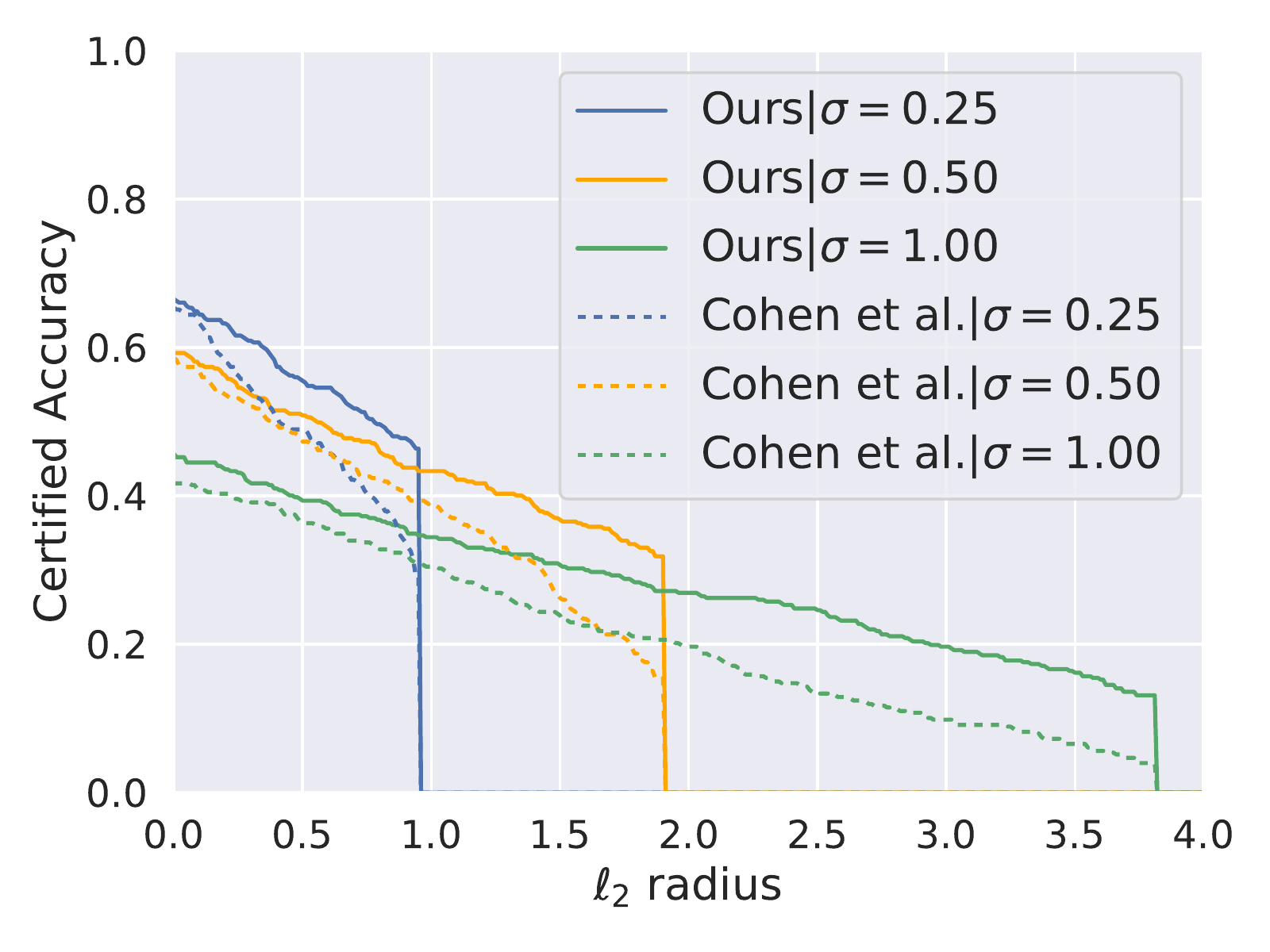}
\includegraphics[width=0.32\textwidth]{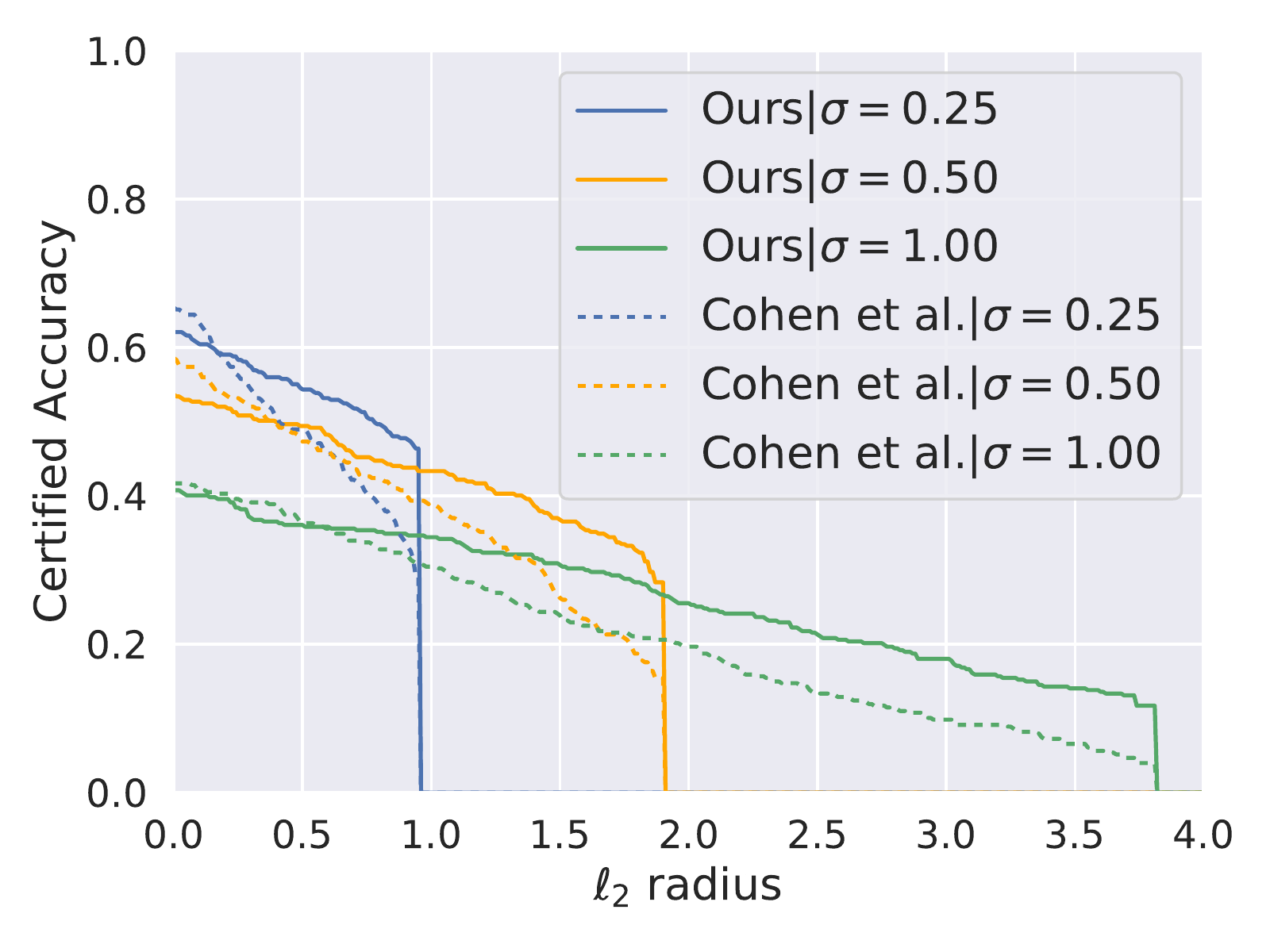}
\caption{Comparing our \smoothAdv-ersarially trained ImageNet classifiers vs \citet{cohen2019certified}.
Subfigure captions are same as Fig.~\ref{fig:cifar}.
Details of each model used to generate these plots and their certified accuracies are in Table~\ref{table:imagenet_certify_results} in Appendix~\ref{appendix:detialed_certification_results}.}
\label{fig:imagenet_results}
\end{figure*}

\subsection{\smoothAdv-ersarial training}\label{sec:adv-training-results}
\vskip -0.2cm
To assess the effectiveness of our method, we learn a smoothed classifier $g$ that is adversarial trained using  \eqref{eq:smooth-adv}. Then we compute the  \textit{certified accuracies}\footnote{Similar to \citet{cohen2019certified}, we certified the full CIFAR-10 test set and a subsampled  ImageNet test set of 500 samples.} over a range of $\ell_2$ radii $r$.
Tables~\ref{imagenet-certified-accuracy} and \ref{cifar-certified-accuracy} report the certified accuracies using our method compared to \citep{cohen2019certified}.
For all radii, we outperform the certified accuracies of \citep{cohen2019certified} by a significant margin on both ImageNet and CIFAR-10. These results are elaborated below.

\paragraph{For CIFAR-10} Fig.~\ref{fig:cifar}(left) plots the upper envelope of the certified accuracies that we get by choosing the best model for each radius over a grid of hyperparameters.
This grid consists of $m_{train} \in \{1, 2, 4, 8\}$, $\sigma \in \{0.12, 0.25, 0.5,1.0\}$, $\eps \in \{0.25, 0.5, 1.0, 2.0\}$ (see \ref{eqn:hyper} for explanation), and one of the following attacks \{\smoothAdvPGD{}, \smoothAdvDDN{}\} with $T \in \{2, 4 , 6, 8, 10\}$ steps. The certified accuracies of each model can be found in Tables~\ref{table:cifar_certify_results_PGD_DDN_2steps_1samples}-\ref{table:cifar_certify_results_PGD_10steps_2/4/8samples} in Appendix~\ref{appendix:detialed_certification_results}. These results are compared to those of \citet{cohen2019certified} by plotting their reported certified accuracies. Fig.~\ref{fig:cifar}(left) also plots the corresponding empirical accuracies using \smoothAdvPGD{} with $m_{test} = 128$. Note that \textit{our \textbf{certified} accuracies are higher than the \textbf{empirical} accuracies of \citet{cohen2019certified}}. 

Fig.~\ref{fig:cifar}(middle) plots our vs \citep{cohen2019certified}'s best models for varying noise level $\sigma$. Fig.~\ref{fig:cifar}(right) plots a representative model for each $\sigma$ from our adversarially trained models. Observe that we outperform \citep{cohen2019certified} in all three plots.

\paragraph{For ImageNet} The results are summarized in Fig.~\ref{fig:imagenet_results}, which is similar to  Fig.~\ref{fig:cifar} for CIFAR-10, with the difference being the set of smoothed models we certify. This set includes smoothed models trained using $m_{\mathrm{train}}=1$, $\sigma \in \{0.25, 0.5,1.0\}$, $\eps \in \{0.5, 1.0, 2.0, 4.0\}$, and one of the following attacks \{1-step \smoothAdvPGD{}, 2-step \smoothAdvDDN{}\}. Again, our models outperform those of \citet{cohen2019certified} overall and per $\sigma$ as well. The certified accuracies of each model can be found in Table~\ref{table:imagenet_certify_results} in Appendix~\ref{appendix:detialed_certification_results}.

We point out, as mentioned by \citet{cohen2019certified}, that $\sigma$  controls a robustness/accuracy trade-off.
When $\sigma$ is low, small radii can be certified with high accuracy, but large radii cannot be certified at all. When $\sigma$ is high, larger radii can be certified, but smaller radii are certified at a lower accuracy. This can be observed in the middle and the right plots of Fig.~\ref{fig:cifar} and \ref{fig:imagenet_results}.

\paragraph{Effect on clean accuracy}
Training smoothed classifers using \smoothAdv{} as shown improves upon the certified accuracy of \citet{cohen2019certified} for each $\sigma$, although this comes with the well-known effect of adversarial training in decreasing the standard accuracy, so we sometimes see small drops in the accuracy at $r=0$, as observed in Fig.~\ref{fig:cifar}(right) and \ref{fig:imagenet_results}(right).

\begin{table}[t]
\caption{Certified $\ell_{\infty}$ robustness at a radius of $\frac{2}{255}$ on CIFAR-10. Note that our models and \citet{carmon2019unlabeled}'s give accuracies with high probability (\textsc{w.h.p}).}
\label{table:certified_linf}
\begin{center}
\begin{small}
\begin{sc}
\begin{tabular}{l|cc}
Model                 & $\ell_{\infty}$ Acc. at $2/255$ & Standard Acc. \\
\midrule
Ours      (\%)                           &           \textbf{68.2} (w.h.p)                 &   \textbf{86.2} (w.h.p)   \\
\citet{carmon2019unlabeled} (\%)         &           $63.8\pm 0.5$ (w.h.p)  &   $80.7\pm0.3$ (w.h.p)     \\
\citet{wong2018provable} (single) (\%)  &           53.9               &    68.3    \\
\citet{wong2018provable} (ensemble) (\%) &           63.6               &    64.1    \\
IBP \cite{gowal2018effectiveness} (\%) &            50.0              &     70.2
\end{tabular}
\end{sc}
\end{small}
\end{center}
\end{table}

\paragraph{$\ell_{2}$ to $\ell_{\infty}$ certified defense}
Since the $\ell_2$ ball of radius $\sqrt d$ contains the $\ell_\infty$ unit ball in $\R^d$, a model robust against $\ell_2$ perturbation of radius $r$ is also robust against $\ell_\infty$ perturbation of norm $r/\sqrt d$.
Via this naive conversion, we find our $\ell_2$-robust models enjoy non-trivial $\ell_{\infty}$ certified robustness. In~Table~\ref{table:certified_linf}, we report the best\footnote{We report the model with the highest certified $\ell_2$ accuracy on CIFAR-10 at a radius of 0.435, amongst all our models trained in this paper.} $\ell_{\infty}$ certified accuracy that we get on CIFAR-10 at a radius of 2/255 (implied by the $\ell_{2}$ certified accuracy at a radius of $0.435 \approx 2\sqrt{3\times 32^2} / 255$).
We exceed previous state-of-the-art in certified $\ell_{\infty}$ defenses by at least $3.9\%$.
We obtain similar results for ImageNet certified $\ell_{\infty}$ defenses at a radius of $1/255$ where we exceed the previous state-of-the-art by $8.2\%$; details are in appendix~\ref{appendix:l2_to_linf}.

\paragraph{Additional experiments and observations}
We compare the effectiveness of smoothed classifiers when they are trained \smoothAdv-versarially vs. when their \textit{base} classifier is trained via standard adversarial training (we will refer to the latter as \emph{vanilla adversarial training}).
As expected, because the training objective of \smoothAdv-models aligns with the actual certification objective,
those models achieve noticeably more certified robustness over all radii compared to smoothed classifiers resulting from vanilla adversarial training.
We defer the results and details to Appendix~\ref{appendix:attack-base-model}.

Furthermore, \smoothAdv{} requires the evaluation of \eqref{eq:plug-in} as discussed in Section~\ref{sec:first-order}. We analyze in Appendix~\ref{appendix:effect_of_m_on_training} how the number of Gaussian noise samples $m_{\mathrm{train}}$, used in \eqref{eq:plug-in} to find adversarial examples, affects the robustness of the resulting smoothed models.
As expected, we observe that models trained with higher $m_{\mathrm{train}}$ tend to have higher certified accuracies.

Finally, we analyze the effect of the maximum allowed $\ell_2$ perturbation $\eps$ used in \smoothAdv{} on the robustness of smoothed models in Appendix~\ref{appendix:effect_of_eps_on_training}. We observe that as $\eps$ increases, the certified accuracies for small $\ell_2$ radii decrease, but those for large $\ell_2$ radii increase, which is expected.

\subsection{More Data for Better Provable Robustness}\label{sec:more-data}
We explore using more data to improve the robustness of smoothed classifiers. Specifically, we pursue two ideas: 1) \textit{pre-training} similar to \cite{hendrycks2019using}, and 2) \textit{semi-supervised learning} as in \cite{carmon2019unlabeled}.

\paragraph{Pre-training} \citet{hendrycks2019using} recently showed that using pre-training can improve the adversarial robustness of classifiers, and achieved state-of-the-art results for \textit{empirical} $l_\infty$ defenses on CIFAR-10 and CIFAR-100. We employ this within our framework; we pretrain smoothed classifiers on ImageNet, then fine-tune them on CIFAR-10. Details can be found in Appendix~\ref{appendix:pretraining}.

\paragraph{Semi-supervised learning} \citet{carmon2019unlabeled} recently showed that using unlabelled data can improve the adversarial robustness as well. They employ a simple, yet effective, semi-supervised learning technique called \emph{self-training} to improve the robustness of CIFAR-10 classifiers. We employ this idea in our framework and we train our CIFAR-10 smoothed classifiers via self-training using the unlabelled dataset used in \citet{carmon2019unlabeled}. Details can be found in Appendix~\ref{appendix:semisupervision}.

We further experiment with combining semi-supervised learning and pre-training, and the details are in Appendix~\ref{appendix:semisupervision_and_pretraining}. We observe consistent improvement in the certified robustness of our smoothed models when we employ pre-training or semi-supervision. The results are summarized in Table~\ref{cifar-certified-accuracy}.

\subsection{Attacking trained models with \smoothAdv}\label{sec:adv-attack-results}
\vskip -0.2cm

\begin{figure*}[t]
\begin{center}
\includegraphics[width=0.7\textwidth]{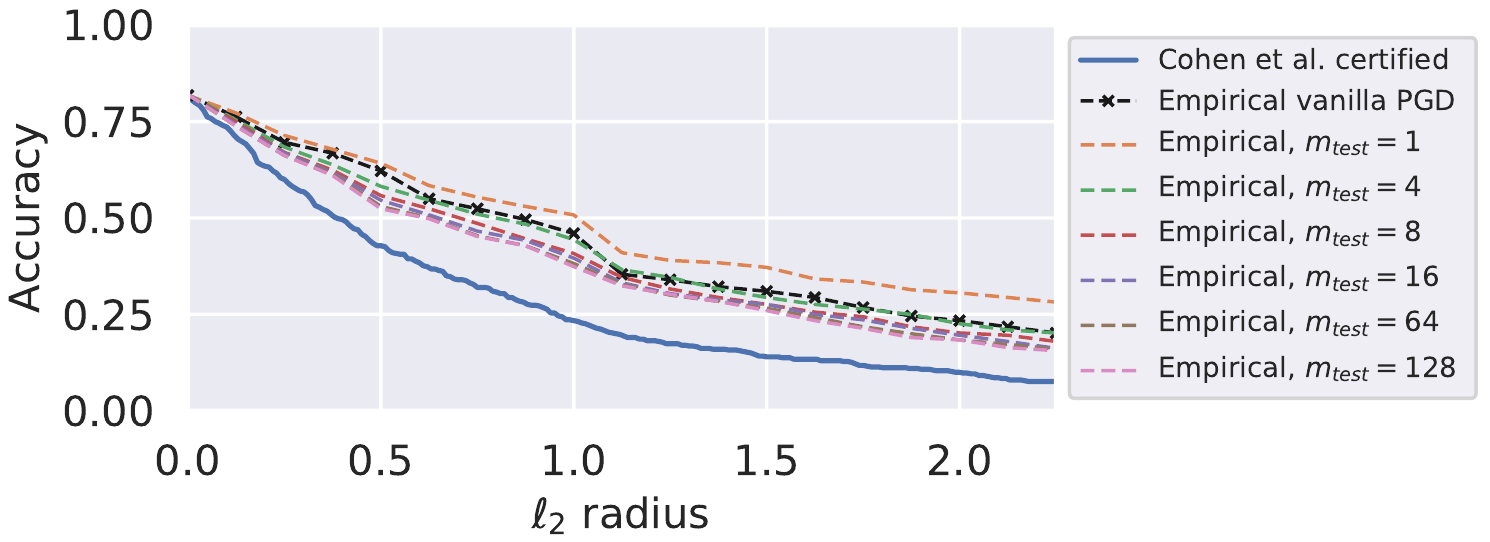}
\caption{Certified and empirical robust accuracy of \citet{cohen2019certified}'s models on CIFAR-10. For each $\ell_2$ radius $r$, the certified/empirical accuracy is the maximum over randomized smoothing models trained using $\sigma \in \{0.12, 0.25, 0.5, 1.0\}$. The empirical accuracies are found using 20 steps of \smoothAdvPGD{}. The closer an empirical curve is to the certified curve, the stronger the corresponding attack is (the lower the better).}
\label{fig:empirical_cohen_vary_m}
\end{center}
\end{figure*}

In this section, we assess the performance of our attack, particularly \smoothAdvPGD{}, for finding adversarial examples for the CIFAR-10 randomized smoothing models of \citet{cohen2019certified}.

\smoothAdvPGD{} requires the evaluation of \eqref{eq:plug-in} as discussed in Section~\ref{sec:first-order}. Here, we analyze how sensitive our attack is to the number of samples $m_{\mathrm{test}}$ used in \eqref{eq:plug-in} for estimating the gradient of the adversarial objective. Fig.~\ref{fig:empirical_cohen_vary_m}  shows the empirical accuracies for various values of $m_{\mathrm{test}}$. Lower accuracies corresponds to stronger attack.
\smoothAdv{} with $m_{\mathrm{test}}=1$ sample performs worse than the vanilla PGD attack on the base classifier, but as $m_{\mathrm{test}}$ increases, our attack becomes stronger, decreasing the gap between certified and empirical accuracies.
We did not observe any noticeable improvement beyond $m_{\mathrm{test}}=128$. More details are in Appendix~\ref{appendix:effect_of_m_on_attack}.

While as discussed here, the success rate of the attack is affected by the number of Gaussian noise samples $m_{\mathrm{test}}$ used by the attacker, it is also affected by the number of Gaussian noise samples $n$ in \textsc{Predict} used by the classifier.
Indeed, as $n$ increases, abstention due to low confidence becomes more rare, increasing the prediction quality of the smoothed classifier.
See a detailed analysis in Appendix~\ref{appendix:effect_of_n_on_predict}.

\vspace{-0.2cm}
\section{Related Work}
\vspace{-0.2cm}
Recently, many approaches (defenses) have been proposed to build adversarially robust classifiers, and these approaches can be broadly divided into \emph{empirical} defenses and \emph{certified} defenses.

\textbf{Empirical defenses} are empirically robust to existing adversarial attacks, and the best empirical defense so far is \emph{adversarial training}  \citep{ kurakin2016adversarial, madry2017towards}. In this kind of defense, a neural network is trained to minimize the worst-case loss over a neighborhood around the input.
Although such defenses seem powerful, nothing guarantees that a more powerful, not yet known, attack would not break them; the most that can be said is that known attacks are unable to find adversarial examples around the data points.
In fact, most empirical defenses proposed in the literature were later ``broken'' by stronger adversaries \citep{carlini2017adversarial, athalye2018obfuscated, uesato2018adversarial, athalye2018robustness}.
To stop this arms race between defenders and attackers, a number of work tried to focus on building certified defenses which enjoy formal robustness guarantees.

\textbf{Certified defenses} are provably robust to a specific class of adversarial perturbation, and can guarantee that for any input $x$, the classifier's prediction is constant within a neighborhood of $x$. These are typically based on certification methods which are either \emph{exact} (a.k.a ``complete'') or \emph{conservative} (a.k.a ``sound but incomplete''). Exact methods, usually based on Satisfiability Modulo Theories solvers \citep{katz2017reluplex, ehlers2017formal} or mixed integer linear programming \citep{tjeng2018evaluating, lomuscio2017approach, fischetti2017deep}, are guaranteed to find an adversarial example around a datapoint if it exists. Unfortunately, they are computationally inefficient and difficult to scale up to large neural networks. Conservative methods are also guaranteed to detect an adversarial example if exists, but they might mistakenly flag a safe data point as vulnerable to adversarial examples. On the bright side, these methods are more scalable and efficient which makes some  of them useful for building certified defenses  \citep{wong2018provable, wang2018mixtrain, wang2018efficient, raghunathan2018certified, raghunathan2018semidefinite, wong2018scaling, dvijotham2018dual, dvijotham2018training, croce2018provable, salman2019convex, gehr2018ai2, mirman2018differentiable, singh2018fast, gowal2018effectiveness, weng2018towards, zhang2018efficient}. However, none of them have yet been shown to scale to practical networks that are large and expressive enough to perform well on ImageNet, for example. To scale up to practical networks, randomized smoothing has been proposed as a \textit{probabilistically} certified defense.

\vspace{-0.2cm}
\paragraph{Randomized smoothing}
A randomized smoothing classifier is not itself a neural network, but uses a neural network as its base for classification.
Randomized smoothing was proposed by several works \citep{liu2018towards, cao2017mitigating} as a heuristic defense without proving any guarantees.
\citet{lecuyer2018certified} first proved robustness guarantees for randomized smoothing classifier, utilizing inequalities from the differential privacy literature.
Subsequently, \citet{li2018second} gave a stronger robustness guarantee using tools from information theory. Recently, \citet{cohen2019certified} provided a tight robustness guarantee for randomized smoothing and consequently achieved the state of the art in $\ell_2$-norm certified defense.

\vskip -0.6cm
\section{Conclusions}
\vspace{-0.3cm}
In this paper, we designed an adapted attack for smoothed classifiers, and we showed how this attack can be used in an adversarial training setting to substantially improve the provable robustness of smoothed classifiers. 
We demonstrated through extensive experimentation that our adversarially trained smooth classifiers consistently outperforms all existing provably $\ell_2$-robust classifiers by a significant margin on ImageNet and CIFAR-10, establishing the state of the art for provable $\ell_2$-defenses.
\newpage

\section*{Acknowledgements}

We would like to thank Zico Kolter, Jeremy Cohen, Elan Rosenfeld, Aleksander Madry, Andrew Ilyas, Dimitris Tsipras, Shibani Santurkar, and Jacob Steinhardt for comments and discussions.

\bibliography{robust_training}

\begin{thebibliography}{41}
\providecommand{\natexlab}[1]{#1}
\providecommand{\url}[1]{\texttt{#1}}
\expandafter\ifx\csname urlstyle\endcsname\relax
  \providecommand{\doi}[1]{doi: #1}\else
  \providecommand{\doi}{doi: \begingroup \urlstyle{rm}\Url}\fi

\bibitem[Athalye and Carlini(2018)]{athalye2018robustness}
Anish Athalye and Nicholas Carlini.
\newblock On the robustness of the cvpr 2018 white-box adversarial example
  defenses.
\newblock \emph{arXiv preprint arXiv:1804.03286}, 2018.

\bibitem[Athalye et~al.(2018)Athalye, Carlini, and
  Wagner]{athalye2018obfuscated}
Anish Athalye, Nicholas Carlini, and David Wagner.
\newblock Obfuscated gradients give a false sense of security: Circumventing
  defenses to adversarial examples.
\newblock \emph{arXiv preprint arXiv:1802.00420}, 2018.

\bibitem[Cao and Gong(2017)]{cao2017mitigating}
Xiaoyu Cao and Neil~Zhenqiang Gong.
\newblock Mitigating evasion attacks to deep neural networks via region-based
  classification.
\newblock In \emph{Proceedings of the 33rd Annual Computer Security
  Applications Conference}, pages 278--287. ACM, 2017.

\bibitem[Carlini and Wagner(2017)]{carlini2017adversarial}
Nicholas Carlini and David Wagner.
\newblock Adversarial examples are not easily detected: Bypassing ten detection
  methods.
\newblock In \emph{Proceedings of the 10th ACM Workshop on Artificial
  Intelligence and Security}, pages 3--14. ACM, 2017.

\bibitem[Carmon et~al.(2019)Carmon, Raghunathan, Schmidt, Liang, and
  Duchi]{carmon2019unlabeled}
Yair Carmon, Aditi Raghunathan, Ludwig Schmidt, Percy Liang, and John~C Duchi.
\newblock Unlabeled data improves adversarial robustness.
\newblock \emph{arXiv preprint arXiv:1905.13736}, 2019.

\bibitem[Cohen et~al.(2019)Cohen, Rosenfeld, and Kolter]{cohen2019certified}
Jeremy~M Cohen, Elan Rosenfeld, and J~Zico Kolter.
\newblock Certified adversarial robustness via randomized smoothing.
\newblock \emph{arXiv preprint arXiv:1902.02918}, 2019.

\bibitem[Croce et~al.(2018)Croce, Andriushchenko, and Hein]{croce2018provable}
Francesco Croce, Maksym Andriushchenko, and Matthias Hein.
\newblock Provable robustness of relu networks via maximization of linear
  regions.
\newblock \emph{arXiv preprint arXiv:1810.07481}, 2018.

\bibitem[Deng et~al.(2009)Deng, Dong, Socher, Li, Li, and
  Fei-Fei]{deng2009imagenet}
Jia Deng, Wei Dong, Richard Socher, Li-Jia Li, Kai Li, and Li~Fei-Fei.
\newblock Imagenet: A large-scale hierarchical image database.
\newblock In \emph{2009 IEEE conference on computer vision and pattern
  recognition}, pages 248--255. Ieee, 2009.

\bibitem[Dvijotham et~al.(2018{\natexlab{a}})Dvijotham, Gowal, Stanforth,
  Arandjelovic, O'Donoghue, Uesato, and Kohli]{dvijotham2018training}
Krishnamurthy Dvijotham, Sven Gowal, Robert Stanforth, Relja Arandjelovic,
  Brendan O'Donoghue, Jonathan Uesato, and Pushmeet Kohli.
\newblock Training verified learners with learned verifiers.
\newblock \emph{arXiv preprint arXiv:1805.10265}, 2018{\natexlab{a}}.

\bibitem[Dvijotham et~al.(2018{\natexlab{b}})Dvijotham, Stanforth, Gowal, Mann,
  and Kohli]{dvijotham2018dual}
Krishnamurthy Dvijotham, Robert Stanforth, Sven Gowal, Timothy Mann, and
  Pushmeet Kohli.
\newblock A dual approach to scalable verification of deep networks.
\newblock \emph{UAI}, 2018{\natexlab{b}}.

\bibitem[Ehlers(2017)]{ehlers2017formal}
Ruediger Ehlers.
\newblock Formal verification of piece-wise linear feed-forward neural
  networks.
\newblock In \emph{International Symposium on Automated Technology for
  Verification and Analysis}, pages 269--286. Springer, 2017.

\bibitem[Fischetti and Jo(2017)]{fischetti2017deep}
Matteo Fischetti and Jason Jo.
\newblock Deep neural networks as 0-1 mixed integer linear programs: A
  feasibility study.
\newblock \emph{arXiv preprint arXiv:1712.06174}, 2017.

\bibitem[Gehr et~al.(2018)Gehr, Mirman, Drachsler-Cohen, Tsankov, Chaudhuri,
  and Vechev]{gehr2018ai2}
Timon Gehr, Matthew Mirman, Dana Drachsler-Cohen, Petar Tsankov, Swarat
  Chaudhuri, and Martin Vechev.
\newblock Ai2: Safety and robustness certification of neural networks with
  abstract interpretation.
\newblock In \emph{2018 IEEE Symposium on Security and Privacy (SP)}, pages
  3--18. IEEE, 2018.

\bibitem[Goodfellow et~al.(2015)Goodfellow, Shlens, and
  Szegedy]{goodfellow2014explaining}
Ian~J Goodfellow, Jonathon Shlens, and Christian Szegedy.
\newblock Explaining and harnessing adversarial examples.
\newblock \emph{ICLR}, 2015.

\bibitem[Gowal et~al.(2018)Gowal, Dvijotham, Stanforth, Bunel, Qin, Uesato,
  Mann, and Kohli]{gowal2018effectiveness}
Sven Gowal, Krishnamurthy Dvijotham, Robert Stanforth, Rudy Bunel, Chongli Qin,
  Jonathan Uesato, Timothy Mann, and Pushmeet Kohli.
\newblock On the effectiveness of interval bound propagation for training
  verifiably robust models.
\newblock \emph{arXiv preprint arXiv:1810.12715}, 2018.

\bibitem[He et~al.(2016)He, Zhang, Ren, and Sun]{he2016deep}
Kaiming He, Xiangyu Zhang, Shaoqing Ren, and Jian Sun.
\newblock Deep residual learning for image recognition.
\newblock In \emph{Proceedings of the IEEE conference on computer vision and
  pattern recognition}, pages 770--778, 2016.

\bibitem[Hendrycks et~al.(2019)Hendrycks, Lee, and Mazeika]{hendrycks2019using}
Dan Hendrycks, Kimin Lee, and Mantas Mazeika.
\newblock Using pre-training can improve model robustness and uncertainty.
\newblock \emph{arXiv preprint arXiv:1901.09960}, 2019.

\bibitem[Katz et~al.(2017)Katz, Barrett, Dill, Julian, and
  Kochenderfer]{katz2017reluplex}
Guy Katz, Clark Barrett, David~L Dill, Kyle Julian, and Mykel~J Kochenderfer.
\newblock Reluplex: An efficient smt solver for verifying deep neural networks.
\newblock In \emph{International Conference on Computer Aided Verification},
  pages 97--117. Springer, 2017.

\bibitem[Krizhevsky and Hinton(2009)]{krizhevsky2009learning}
Alex Krizhevsky and Geoffrey Hinton.
\newblock Learning multiple layers of features from tiny images.
\newblock Technical report, Citeseer, 2009.

\bibitem[Kurakin et~al.(2016)Kurakin, Goodfellow, and
  Bengio]{kurakin2016adversarial}
Alexey Kurakin, Ian Goodfellow, and Samy Bengio.
\newblock Adversarial machine learning at scale.
\newblock \emph{arXiv preprint arXiv:1611.01236}, 2016.

\bibitem[Lecuyer et~al.(2018)Lecuyer, Atlidakis, Geambasu, Hsu, and
  Jana]{lecuyer2018certified}
Mathias Lecuyer, Vaggelis Atlidakis, Roxana Geambasu, Daniel Hsu, and Suman
  Jana.
\newblock Certified robustness to adversarial examples with differential
  privacy.
\newblock \emph{arXiv preprint arXiv:1802.03471}, 2018.

\bibitem[Li et~al.(2018)Li, Chen, Wang, and Carin]{li2018second}
Bai Li, Changyou Chen, Wenlin Wang, and Lawrence Carin.
\newblock Second-order adversarial attack and certifiable robustness.
\newblock \emph{arXiv preprint arXiv:1809.03113}, 2018.

\bibitem[Liu et~al.(2018)Liu, Cheng, Zhang, and Hsieh]{liu2018towards}
Xuanqing Liu, Minhao Cheng, Huan Zhang, and Cho-Jui Hsieh.
\newblock Towards robust neural networks via random self-ensemble.
\newblock In \emph{Proceedings of the European Conference on Computer Vision
  (ECCV)}, pages 369--385, 2018.

\bibitem[Lomuscio and Maganti(2017)]{lomuscio2017approach}
Alessio Lomuscio and Lalit Maganti.
\newblock An approach to reachability analysis for feed-forward relu neural
  networks.
\newblock \emph{arXiv preprint arXiv:1706.07351}, 2017.

\bibitem[Madry et~al.(2017)Madry, Makelov, Schmidt, Tsipras, and
  Vladu]{madry2017towards}
Aleksander Madry, Aleksandar Makelov, Ludwig Schmidt, Dimitris Tsipras, and
  Adrian Vladu.
\newblock Towards deep learning models resistant to adversarial attacks.
\newblock \emph{arXiv preprint arXiv:1706.06083}, 2017.

\bibitem[Mirman et~al.(2018)Mirman, Gehr, and Vechev]{mirman2018differentiable}
Matthew Mirman, Timon Gehr, and Martin Vechev.
\newblock Differentiable abstract interpretation for provably robust neural
  networks.
\newblock In \emph{International Conference on Machine Learning}, pages
  3575--3583, 2018.

\bibitem[Raghunathan et~al.(2018{\natexlab{a}})Raghunathan, Steinhardt, and
  Liang]{raghunathan2018certified}
Aditi Raghunathan, Jacob Steinhardt, and Percy Liang.
\newblock Certified defenses against adversarial examples.
\newblock \emph{International Conference on Learning Representations (ICLR),
  arXiv preprint arXiv:1801.09344}, 2018{\natexlab{a}}.

\bibitem[Raghunathan et~al.(2018{\natexlab{b}})Raghunathan, Steinhardt, and
  Liang]{raghunathan2018semidefinite}
Aditi Raghunathan, Jacob Steinhardt, and Percy~S Liang.
\newblock Semidefinite relaxations for certifying robustness to adversarial
  examples.
\newblock In \emph{Advances in Neural Information Processing Systems}, pages
  10877--10887, 2018{\natexlab{b}}.

\bibitem[Rony et~al.(2018)Rony, Hafemann, Oliveira, Ayed, Sabourin, and
  Granger]{rony2018decoupling}
J{\'e}r{\^o}me Rony, Luiz~G Hafemann, Luis~S Oliveira, Ismail~Ben Ayed, Robert
  Sabourin, and Eric Granger.
\newblock Decoupling direction and norm for efficient gradient-based l2
  adversarial attacks and defenses.
\newblock \emph{arXiv preprint arXiv:1811.09600}, 2018.

\bibitem[Salman et~al.(2019)Salman, Yang, Zhang, Hsieh, and
  Zhang]{salman2019convex}
Hadi Salman, Greg Yang, Huan Zhang, Cho-Jui Hsieh, and Pengchuan Zhang.
\newblock A convex relaxation barrier to tight robustness verification of
  neural networks.
\newblock In \emph{Advances in Neural Information Processing Systems}, pages
  9832--9842, 2019.

\bibitem[Singh et~al.(2018)Singh, Gehr, Mirman, P{\"u}schel, and
  Vechev]{singh2018fast}
Gagandeep Singh, Timon Gehr, Matthew Mirman, Markus P{\"u}schel, and Martin
  Vechev.
\newblock Fast and effective robustness certification.
\newblock In \emph{Advances in Neural Information Processing Systems}, pages
  10825--10836, 2018.

\bibitem[Stein(1981)]{stein1981estimation}
Charles~M Stein.
\newblock Estimation of the mean of a multivariate normal distribution.
\newblock \emph{The annals of Statistics}, pages 1135--1151, 1981.

\bibitem[Szegedy et~al.(2013)Szegedy, Zaremba, Sutskever, Bruna, Erhan,
  Goodfellow, and Fergus]{szegedy2013intriguing}
Christian Szegedy, Wojciech Zaremba, Ilya Sutskever, Joan Bruna, Dumitru Erhan,
  Ian Goodfellow, and Rob Fergus.
\newblock Intriguing properties of neural networks.
\newblock \emph{arXiv preprint arXiv:1312.6199}, 2013.

\bibitem[Tjeng et~al.(2019)Tjeng, Xiao, and Tedrake]{tjeng2018evaluating}
Vincent Tjeng, Kai~Y. Xiao, and Russ Tedrake.
\newblock Evaluating robustness of neural networks with mixed integer
  programming.
\newblock In \emph{International Conference on Learning Representations}, 2019.
\newblock URL \url{https://openreview.net/forum?id=HyGIdiRqtm}.

\bibitem[Uesato et~al.(2018)Uesato, O'Donoghue, Oord, and
  Kohli]{uesato2018adversarial}
Jonathan Uesato, Brendan O'Donoghue, Aaron van~den Oord, and Pushmeet Kohli.
\newblock Adversarial risk and the dangers of evaluating against weak attacks.
\newblock \emph{arXiv preprint arXiv:1802.05666}, 2018.

\bibitem[Wang et~al.(2018{\natexlab{a}})Wang, Chen, Abdou, and
  Jana]{wang2018mixtrain}
Shiqi Wang, Yizheng Chen, Ahmed Abdou, and Suman Jana.
\newblock Mixtrain: Scalable training of formally robust neural networks.
\newblock \emph{arXiv preprint arXiv:1811.02625}, 2018{\natexlab{a}}.

\bibitem[Wang et~al.(2018{\natexlab{b}})Wang, Pei, Whitehouse, Yang, and
  Jana]{wang2018efficient}
Shiqi Wang, Kexin Pei, Justin Whitehouse, Junfeng Yang, and Suman Jana.
\newblock Efficient formal safety analysis of neural networks.
\newblock In \emph{Advances in Neural Information Processing Systems}, pages
  6369--6379, 2018{\natexlab{b}}.

\bibitem[Weng et~al.(2018)Weng, Zhang, Chen, Song, Hsieh, Boning, Dhillon, and
  Daniel]{weng2018towards}
Tsui-Wei Weng, Huan Zhang, Hongge Chen, Zhao Song, Cho-Jui Hsieh, Duane Boning,
  Inderjit~S Dhillon, and Luca Daniel.
\newblock Towards fast computation of certified robustness for {ReLU} networks.
\newblock In \emph{International Conference on Machine Learning}, 2018.

\bibitem[Wong and Kolter(2018)]{wong2018provable}
Eric Wong and Zico Kolter.
\newblock Provable defenses against adversarial examples via the convex outer
  adversarial polytope.
\newblock In \emph{International Conference on Machine Learning (ICML)}, pages
  5283--5292, 2018.

\bibitem[Wong et~al.(2018)Wong, Schmidt, Metzen, and Kolter]{wong2018scaling}
Eric Wong, Frank Schmidt, Jan~Hendrik Metzen, and J~Zico Kolter.
\newblock Scaling provable adversarial defenses.
\newblock \emph{Advances in Neural Information Processing Systems (NIPS)},
  2018.

\bibitem[Zhang et~al.(2018)Zhang, Weng, Chen, Hsieh, and
  Daniel]{zhang2018efficient}
Huan Zhang, Tsui-Wei Weng, Pin-Yu Chen, Cho-Jui Hsieh, and Luca Daniel.
\newblock Efficient neural network robustness certification with general
  activation functions.
\newblock In \emph{Advances in Neural Information Processing Systems}, pages
  4939--4948, 2018.

\end{thebibliography}
\bibliographystyle{plainnat}

\clearpage
\appendix

\section{Alternative proof of the robustness guarantee of \citet{cohen2019certified} via explicit Lipschitz constants of smoothed classifier}\label{appendix:alternative-proof}
In this appendix, we present an alternate derivation of \eqref{eq:main_bound}.
Fix $f : \R^n \rightarrow [0,1]$ and define $\hat{f}$ by:
\[
\hat{f}(x) = \left( f * \normal(0, I) \right) (x) = \frac1{(2\pi)^{n/2}} \int_{\R^n} f(t) \exp\left(- \frac12 \|x-t\|^2 \right) dt \,.
\]
The smoothed function $\hat f$ is known as the \emph{Weierstrass transform} of $f$, and a classical property of the Weierstrass transform is its induced smoothness, as demonstrated by the following.
\begin{lemma}
The function $\hat{f}$ is $\sqrt{\frac{2}{\pi}}$-Lipschitz.
\end{lemma}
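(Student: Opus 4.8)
The plan is to bound the Lipschitz constant by controlling the gradient of $\hat f$ uniformly in $x$, and then integrate along line segments. First I would rewrite the Weierstrass transform as a Gaussian expectation, $\hat f(x) = \E_{\delta \sim \normal(0,I)}[f(x+\delta)]$, via the change of variables $t = x+\delta$. Differentiating under the integral sign and using the explicit form $\nabla_x \exp(-\tfrac12\|x-t\|^2) = (t-x)\exp(-\tfrac12\|x-t\|^2)$, I would obtain
\[
\nabla \hat f(x) = \E_{\delta \sim \normal(0,I)}\!\left[\,\delta\, f(x+\delta)\,\right],
\]
which is exactly the Stein's-lemma identity already invoked in~\eqref{eq:grad-free} (specialized to $\sigma = 1$). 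This reduces the whole statement to showing $\|\nabla \hat f(x)\| \le \sqrt{2/\pi}$ for every $x$.

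Next I would bound this gradient by testing it against an arbitrary unit vector $v$. Since $\|\nabla \hat f(x)\| = \sup_{\|v\|=1} v^\top \nabla \hat f(x)$, it suffices to control $v^\top \nabla \hat f(x) = \E[(v^\top\delta)\, f(x+\delta)]$. Using $0 \le f \le 1$ together with the elementary pointwise bound $(v^\top\delta)\, f(x+\delta) \le |v^\top\delta|$, this quantity is at most $\E[\,|v^\top\delta|\,]$. The key observation is that for a \emph{unit} vector $v$ the projection $v^\top\delta$ is a standard one-dimensional Gaussian, so $\E[\,|v^\top\delta|\,]$ is the mean of a half-normal, namely $\sqrt{2/\pi}$, independently of both $v$ and $x$. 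Taking the supremum over $v$ (which also handles the opposite sign) then gives $\|\nabla \hat f(x)\| \le \sqrt{2/\pi}$ uniformly.

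Finally I would upgrade the uniform gradient bound to the Lipschitz statement: for any $x,y$, writing $\hat f(x) - \hat f(y) = \int_0^1 \nabla \hat f\big(y + s(x-y)\big)^\top (x-y)\, ds$ and applying Cauchy--Schwarz yields $|\hat f(x) - \hat f(y)| \le \sqrt{2/\pi}\,\|x-y\|$. I expect the main technical obstacle to be the interchange of gradient and integral in the first step: since $f$ is only assumed bounded and measurable rather than differentiable, I would justify differentiation under the integral by dominated convergence, using that $\|t-x\|\exp(-\tfrac12\|x-t\|^2)$ is integrable and can be dominated locally uniformly in $x$. An alternative route that avoids differentiating $f$ entirely is to bound $|\hat f(x) - \hat f(y)|$ directly by $\tfrac{1}{(2\pi)^{n/2}}\int |e^{-\|x-t\|^2/2} - e^{-\|y-t\|^2/2}|\, dt$ (again using $0 \le f \le 1$), recognize this as twice the total-variation distance between $\normal(x,I)$ and $\normal(y,I)$, and use that as a function of $\|x-y\|$ this distance has slope $\sqrt{2/\pi}$ at coincident means and is concave; I would keep this as a fallback but expect the gradient computation to give the cleaner presentation.
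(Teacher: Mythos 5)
Your proposal is correct and follows essentially the same route as the paper's proof: differentiate under the integral to express $\nabla\hat f(x)$ as a Gaussian integral against $f$, bound the directional derivative along an arbitrary unit vector using $0\le f\le 1$, and recognize the resulting one-dimensional integral as the half-normal mean $\sqrt{2/\pi}$. The extra care you take (justifying the interchange of derivative and integral, and integrating the gradient bound along a segment) only makes explicit steps the paper leaves implicit.
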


\begin{proof}
It suffices to prove that for any unit direction $u$ one has $u \cdot \nabla \hat{f}(x) \leq \sqrt{\frac{2}{\pi}}$. Note that:
\begin{equation} \label{eq:steinlemmalol}
\nabla \hat{f}(x) = \frac1{(2\pi)^{n/2}} \int_{\R^n} f(t) (x-t) \exp\left(- \frac12 \|x-t\|^2 \right) dt \,,
\end{equation}
and thus (using $|f(t)| \leq 1$, and classical integration of the Gaussian density)
\begin{eqnarray*}
u \cdot \nabla \hat{f}(x) & \leq & \frac1{(2\pi)^{n/2}} \int_{\R^n} |u \cdot (x-t)| \exp\left(- \frac12 \|x-t\|^2 \right) dt \\
& = & \frac{1}{\sqrt{2 \pi}} \int_{-\infty}^{+ \infty} |s| \exp\left(-\frac12 s^2 \right) ds = \sqrt{\frac{2}{\pi}} \,.
\end{eqnarray*}
\end{proof}

However, $\hat{f}$ in fact satisfies an even stronger \emph{nonlinear} smoothness property as shown in the following lemma.

\begin{lemma}
Let $\Phi(a) = \frac{1}{\sqrt{2 \pi}} \int_{-\infty}^a \exp\left( - \frac12 s^2 \right) ds$. For any function $f : \R^n \rightarrow [0,1]$, the map $x \mapsto \Phi^{-1}(\hat{f}(x))$ is $1$-Lipschitz.
\end{lemma}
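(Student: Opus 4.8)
The plan is to prove the Lipschitz bound by controlling the directional derivative of $g := \Phi^{-1}\circ\hat f$ in an arbitrary unit direction $u$ and then integrating along line segments. Writing $\varphi = \Phi'$ for the standard Gaussian density and differentiating through $\Phi^{-1}$ gives
\[
u\cdot\nabla g(x) \;=\; \frac{u\cdot\nabla\hat f(x)}{\varphi\!\left(\Phi^{-1}(\hat f(x))\right)},
\]
so the whole statement reduces to the pointwise estimate $\bigl| u\cdot\nabla\hat f(x) \bigr| \le \varphi\!\left(\Phi^{-1}(\hat f(x))\right)$. This is the sharp, value-dependent refinement of the previous lemma: there we only used the crude bound $\E|u\cdot\delta| = \sqrt{2/\pi}$ together with $|f|\le 1$, whereas now we must additionally exploit that the value $\hat f(x)=:p$ constrains how much of the mass of $f$ can sit where $u\cdot\delta$ is large.

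First I would collapse the $n$-dimensional gradient to a one-dimensional problem by conditioning on the coordinate $Z := u\cdot\delta$, which is $\normal(0,1)$. From \eqref{eq:steinlemmalol} one has $u\cdot\nabla\hat f(x) = -\,\E_\delta[(u\cdot\delta)\,f(x+\delta)]$, so setting $h(z) := \E[f(x+\delta)\mid u\cdot\delta = z]\in[0,1]$ yields $\hat f(x) = \int h(z)\varphi(z)\,dz =: p$ and $u\cdot\nabla\hat f(x) = -\int z\,h(z)\varphi(z)\,dz$. The claim then becomes the purely one-dimensional inequality: for every measurable $h:\R\to[0,1]$ with $\int h\varphi = p$,
\[
\left| \int_{\R} z\,h(z)\,\varphi(z)\,dz \right| \;\le\; \varphi\!\left(\Phi^{-1}(p)\right).
\]

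The core step is to solve this extremal problem. Maximizing the linear functional $h\mapsto\int z\,h\varphi$ over the convex set $\{\,0\le h\le 1,\ \int h\varphi = p\,\}$ is a Neyman--Pearson / bang-bang problem, whose extremizer is a threshold indicator $h = \mathbf{1}_{[a,\infty)}$ with $a$ chosen so that $1-\Phi(a)=p$, i.e. $a=\Phi^{-1}(1-p)=-\Phi^{-1}(p)$. Using $z\varphi(z) = -\varphi'(z)$ one computes $\int_a^\infty z\varphi(z)\,dz = \varphi(a) = \varphi(\Phi^{-1}(p))$, and the reflection $z\mapsto -z$ shows the minimum is $-\varphi(\Phi^{-1}(p))$; together these give the displayed bound, attained with equality, so that $|u\cdot\nabla g(x)|\le 1$ and the Lipschitz constant is exactly $1$.

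I expect the main obstacle to be making this last extremal argument rigorous — justifying that the optimum of the box-constrained linear program is genuinely attained at a single half-line $\{z\ge a\}$, rather than merely bounded by it. This is where the Gaussian structure really enters and is, morally, the functional form of the Gaussian isoperimetric inequality. The remaining difficulties are analytic rather than conceptual: $\Phi^{-1}$ is smooth only on $(0,1)$, so I would first observe that $\hat f(x)\in(0,1)$ whenever $f\not\equiv 0,1$ (the Gaussian kernel being strictly positive everywhere), then justify differentiating under the integral sign to legitimize the chain rule and the Stein-type identity, and finally recover the boundary cases $\hat f\in\{0,1\}$ and nonsmooth $f$ by a continuity/approximation argument.
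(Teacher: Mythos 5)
Your proposal is correct and follows essentially the same route as the paper's proof: reduce via the chain rule to the pointwise bound $u\cdot\nabla\hat f(x)\le \varphi\!\left(\Phi^{-1}(p)\right)$, then solve the Neyman--Pearson extremal problem over $[0,1]$-valued functions with fixed Gaussian mean $p$, whose bang-bang optimizer is the threshold (half-space) indicator $\mathds1\{z\ge -\Phi^{-1}(p)\}$. Your only departure --- conditioning on $Z=u\cdot\delta$ to make the extremal problem one-dimensional --- is a cosmetic reformulation of the paper's argument, which runs the same mass-transfer/threshold argument directly in $\R^n$.
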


\begin{proof}
Note that:
\[
\nabla \Phi^{-1}(\hat{f}(x)) = \frac{\nabla \hat{f}(x)}{\Phi'(\Phi^{-1}(\hat{f}(x))} \,,
\]
and thus we need to prove that for any unit direction $u$, denoting $p = \hat{f}(x)$,
\[
u \cdot \nabla \hat{f}(x) \leq \frac{1}{\sqrt{2\pi}} \exp\left( -\frac12 (\Phi^{-1}(p))^2 \right) \,.
\]
Note that the left-hand side can be written as follows (recall \eqref{eq:steinlemmalol}) 
\[
\E_{X \sim \mathcal{N}(0,I_n)} [ f(x+X) X \cdot u ] \,.
\]
We now claim that the supremum of the above quantity over all functions $f : \R^n \rightarrow [0,1]$, subject to the constraint that $\mathbb{E}[ f(x+X) ] = p$, is equal to:
\begin{equation} \label{eq:seb2}
\mathbb{E} [ (X \cdot u) \mathds1\{X \cdot u \geq - \Phi^{-1}(p) \}] = \frac{1}{\sqrt{2\pi}}  \exp\left( -\frac12 (\Phi^{-1}(p)^2) \right) \,,
\end{equation}
which would conclude the proof. 

To see why the latter claim is true, first notice that $h : x \mapsto \mathds1\{x \cdot u \geq - \Phi^{-1}(p)\}$ achieves equality.
Let us assume by contradiction that the maximizer is obtained at some function $f: \R^n \to [0, 1]$ different from $h$. Consider the set $\Omega^+$ where $h(x) > f(x)$ and $\Omega^-$ the set where $h(x) < f(x)$, and note that since both functions integrate to $p$, it must be that $\int_{\Omega^+} (h-f) d\mu= \int_{\Omega^-} (f-h) d \mu$ (where $\mu$ is the Gaussian measure). Now simply consider the new function $\tilde{f} = f + (h-f) \mathds1\{\Omega^+\} - (f-h) \mathds1\{\Omega^-\}$. Note that $\tilde{f}$ takes value in $[0,1]$ and integrates to $p$. Moreover, denoting $g(x) = x \cdot u$, one has $\int f g d\mu < \int \tilde{f} g d\mu$. Indeed, by definition of $h$, one has for any $x \in \Omega_+$ and $y \in \Omega^-$ that $g(x) > g(y)$. This concludes the proof.
\end{proof}

It turns out that the smoothness property of lemma 2 naturally leads to the robustness guarantee \eqref{eq:main_bound} of \citet{cohen2019certified}. To see why, let $\hat f_i: \R^n \to [0, 1]$ be the output of the smoothed classifier mapping a point $x\in\R^n$ to the probability  of it belonging to class $c_i$. Assume that the smooth classifier assigns to $x$ the class $c_A$ with probability $p_A= \hat{f}_A(x)$. Denote by $c_B$ any other class such that $c_B \neq c_A$ and $p_B = \hat{f}_B(x) \le p_A$. By lemma~2, we know that under any perturbation $\delta \in \R^n$ of $x$, 
\begin{equation}
    \Phi^{-1}\left(\hat{f}_A(x)\right) - \Phi^{-1}\left(\hat{f}_A(x + \delta)\right) \leq \|\delta\|_2.
\end{equation}
For an adversarial $\delta$, $\hat{f}_A(x + \delta) \le \hat{f}_B(x + \delta)$ for some class $c_B$, leading to
\begin{equation}\label{eq:ca}
    \Phi^{-1}\left(\hat{f}_A(x)\right) - \Phi^{-1}\left(\hat{f}_B(x + \delta)\right) \le \|\delta\|_2 .
\end{equation}

By lemma~2 applied to $\hat{f}_B$, and noting that $\hat{f}_B(x+\delta) \ge \hat{f}_B(x)$ , we know that,
\begin{equation}\label{eq:cb}
    \Phi^{-1}\left(\hat{f}_B(x + \delta)\right) - \Phi^{-1}\left(\hat{f}_B(x)\right) \leq \|\delta\|_2.
\end{equation}

Combining  \eqref{eq:ca} and \eqref{eq:cb}, it is straightforward to see that      
\begin{equation}
    \|\delta\|_2 \ge \frac12\left( \Phi^{-1}\left(p_A\right) - \Phi^{-1}\left(p_B\right)\right)
\end{equation}
The above equation gives a lower bound on the minimum $\ell_2$ adversarial perturbation required to flip the classification from $c_A$ to $c_B$. This lower bound is minimized when $p_B$ is maximized over the set of classes $C\setminus\{c_A\}$. Therefore, $c_B$ is the runner up class returned by the smoothed classifier at $x$.
Finally, the factor $\sigma$ that appears in \eqref{eq:main_bound} can be obtained by re-deriving the above with $\hat{f}(x) = \left( f * \normal(0,\sigma^2 I) \right) (x)$ and $\Phi(a)=\frac{1}{\sqrt{2 \pi}} \int_{-\infty}^a \exp\left( - \frac12 (\frac{s}{\sigma})^2 \right) ds$.

Note that both lemmas presented in this appendix give the same robustness guarantee for small gaps ($p_A - p_B$), but the second lemma is much better for large gaps (in fact, in the limit of a gap going to $1$, the second lemma gives an infinite radius while the first lemma only gives a radius of $\frac12 \sqrt{\frac{\pi}{2}}$).

\section{Another perspective for deriving \smoothAdv{}}
\label{sec:alt-deriv}
In this section we provide an alternative motivation for the \smoothAdv{} objective presented in Section~\ref{sec:smooth-adv-primary-derivation}.
We assume that we have a hard classifier $f: \R^d \to \mathcal{Y}$ which takes the form $f(x) = \argmax_{y \in \mathcal{Y}} L(x)_y$, for some function $L: \R^d \to \R^{\mathcal{Y}}$.
If $f$ is a neural network classifier, this $L$ can be taken for instance to be the map from the input to the logit layer immediately preceding the softmax.
If $f$ is of this form, then the smoothed soft classifier $g$ with parameter $\sigma^2$ associated to (the one-hot encoding of) $f$ can be written has
\begin{align}
g(x)_{y} &= \Pr_{\delta \sim \normal(0, \sigma^2 I)} \left[ \argmax_{y' \in \mathcal{Y}} L(x + \delta)_{y'} = y\right] \nonumber \\
&= \E_{\delta \sim \normal(0, \sigma^2 I)} \left[ \nu(L(x + \delta))_{y} \right] \; , \label{eq:exact-max}
\end{align}
for all $y \in \mathcal{Y}$, where $\nu: \R^d \to \R^{\mathcal{Y}}$ is the function, which at input $z$, has $y$-th coordinate equal to $1$ if and only if $y = \argmax_{y' \in \mathcal{Y}} z_{y'}$, and zero otherwise.
The function $\nu$ is somewhat hard to work with, therefore we will approximate it with a smooth function, namely, the softmax function.
Recall that the softmax function with inverse temperature parameter $\beta$ is the function $\zeta_\beta: \R^{\mathcal{Y}} \to P(\mathcal{Y})$ given by $\zeta_\beta (z)_y = e^{\beta z_y} / \sum_{y' \in \mathcal{Y}} e^{\beta z_{y'}}$.
Observe that for any $z \in \R^{\mathcal{Y}}$, we have that $\zeta_\beta (z) \to \nu(z)$ as $\beta \to \infty$.
Thus we can approximate~\eqref{eq:exact-max} with
\begin{equation}\label{eq:approx-max}
    g(x)_{y} \approx \E_{\delta \sim \normal(0, \sigma^2 I)} \left[ \zeta_\beta (L(x + \delta))_{y} \right] \; .
\end{equation}
To find an adversarial perturbation of $g$ at data point $(x, y)$, it is sufficient to find a perturbation $\hat x$ so that $g(x)_{y}$ is minimized.
Combining this with the approximation~\eqref{eq:approx-max}, we find that a heuristic to find an adversarial example for the smoothed classifier at $(x, y)$ is to solve the following optimization problem:
\begin{equation}
\label{eq:smooth-adv-1}
    \hat x = \argmin_{\| x' - x\|_2 \leq \eps} \E_{\delta \sim \normal(0, \sigma^2 I)} \left[ \zeta_\beta (L(x' + \delta))_y \right] \; ,
\end{equation}
and as we let $\beta \to \infty$, this converges to finding an adversarial example for the true smoothed classifier.

To conclude, we simply observe that for neural networks, $ \zeta_\beta (L(x + \delta))_y$ is exactly the soft classifier that is thresholded to form the hard classifier, if $\beta$ is taken to be $1$.
Therefore the solution to~\eqref{eq:smooth-adv} and~\eqref{eq:smooth-adv-1} with $\beta = 1$ are the same, since $\log$ is a monotonic function.

An interesting direction is to investigate whether varying $\beta$ in~\eqref{eq:smooth-adv-1} allows us to improve our adversarial attacks, and if they do, whether this gives us stronger adversarial training as well.
Intuitively, as we take $\beta \to \infty$, the quality of the optimal solution should increase, but the optimization problem becomes increasingly ill-behaved, and so it is not clear if the actual solution we obtain to this problem via first order methods becomes better or not.

\section{Additional Experiments}

\subsection{Adversarial attacking the base model instead of the smoothed model}\label{appendix:attack-base-model}

We compare \smoothAdv{}-ersarial training (training the smoothed classifier $g$) to:
\begin{enumerate}
    \item using vanilla adversarial training (PGD) to find adversarial examples of the \textit{base classifier} $f$ and train on them. We refer to this as \textbf{Vanilla PGD} training. 
    \item using vanilla adversarial training (PGD) to find adversarial examples of the \textit{base classifier} $f$, add Gaussian noise to them, then train on the resulting inputs. We refer to this as \textbf{Vanilla PGD+noise} training.
\end{enumerate}
For our method and the above two methods, we use $T = 2$ steps of attack, $m_{train} = 1$, and we train for $\eps \in \{0.25, 0.5, 1.0, 2.0\}$, and for $\sigma \in \{0.12, 0.25, 0.5, 1.0\}$.

Fig.~\ref{fig:ours_cohen_base} plots the best certified accuracies over all $\eps$ and $\sigma$ values, for each $\ell_2$ radius $r$ using our \smoothAdvPGD{} trained classifiers  vs. smoothed models trained via Vanilla PGD or Vanilla PGD+noise. Fig.~\ref{fig:ours_cohen_base} also plots \citet{cohen2019certified} results as a baseline. Observe that \smoothAdv-ersially trained models are more robust overall.

\begin{figure}[h]
\centering
\includegraphics[width=0.6\textwidth]{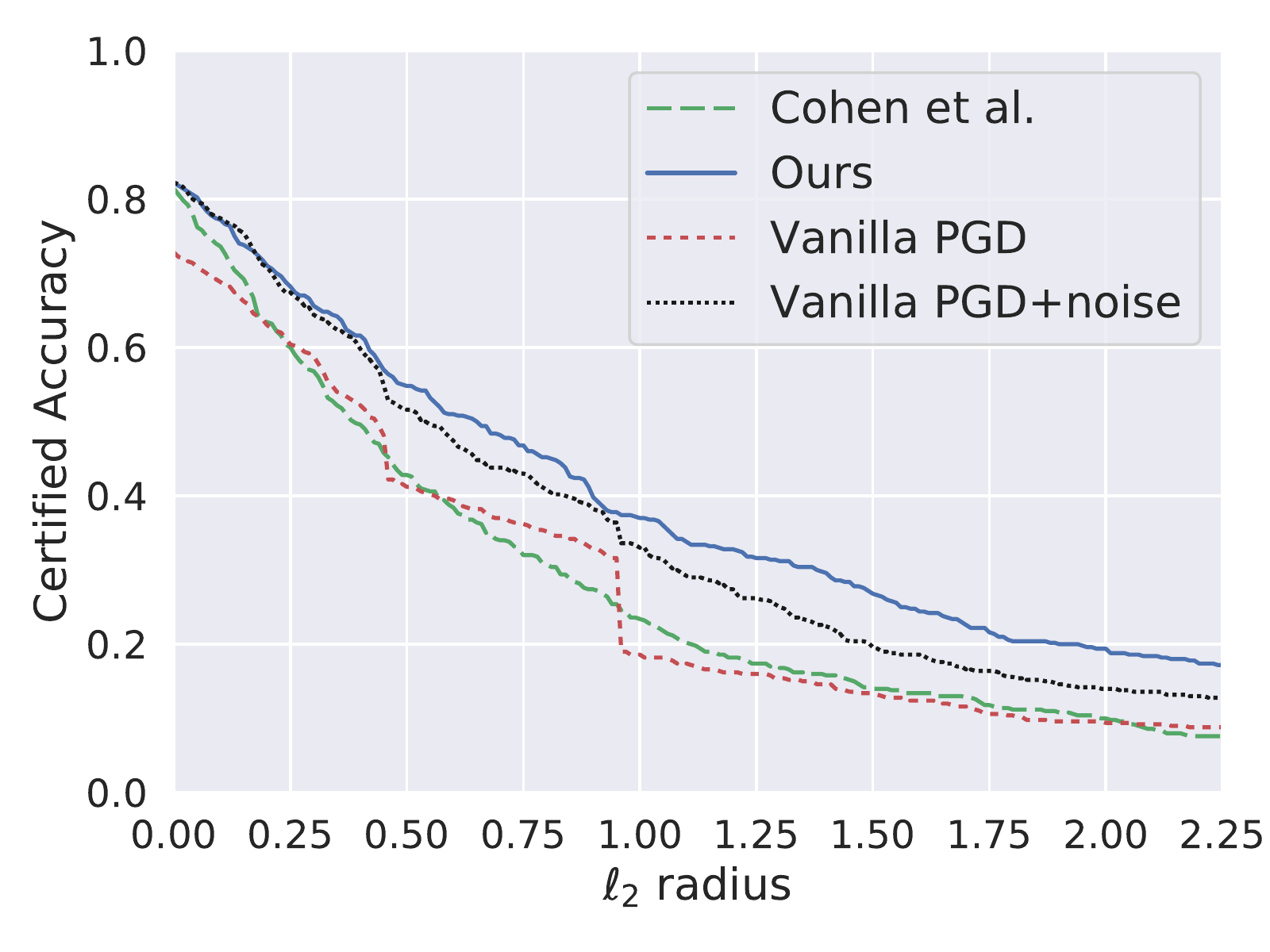}
\caption{Certified defenses: ours vs. \citet{cohen2019certified} vs. vanilla PGD vs. vanilla PGD + noise.}
\label{fig:ours_cohen_base}
\end{figure}

\subsection{Effect of number of noise samples $m_{train}$ in \eqref{eq:plug-in}  during \smoothAdv-ersarial training on the certified accuracy of smoothed classifiers}\label{appendix:effect_of_m_on_training}

As presented in Section~\ref{sec:adv-attack-results}, more noise samples $\delta_i$ lead to stronger \smoothAdv-eraial attack. Here, we demonstrate that if we train with such improved attacks, we get higher certified accuracies of the smoothed classifier. Fig.~\ref{fig:certified_ours_vary_m} plots the best certified accuracies over models trained using \smoothAdvPGD{} or \smoothAdvDDN{} with $T \in \{2,4,6,8,10\}$, $\sigma \in \{0.12, 0.25, 0.5, 1.0\}$, $\eps \in \{0.25, 0.5, 1.0, 2.0\}$, and across various number of noise samples $m_{train}$ for the attack.
Observe that models trained with higher $m_{train}$ tend to have higher certified accuracies.

\begin{figure}[h]
\centering
\includegraphics[width=0.6\textwidth]{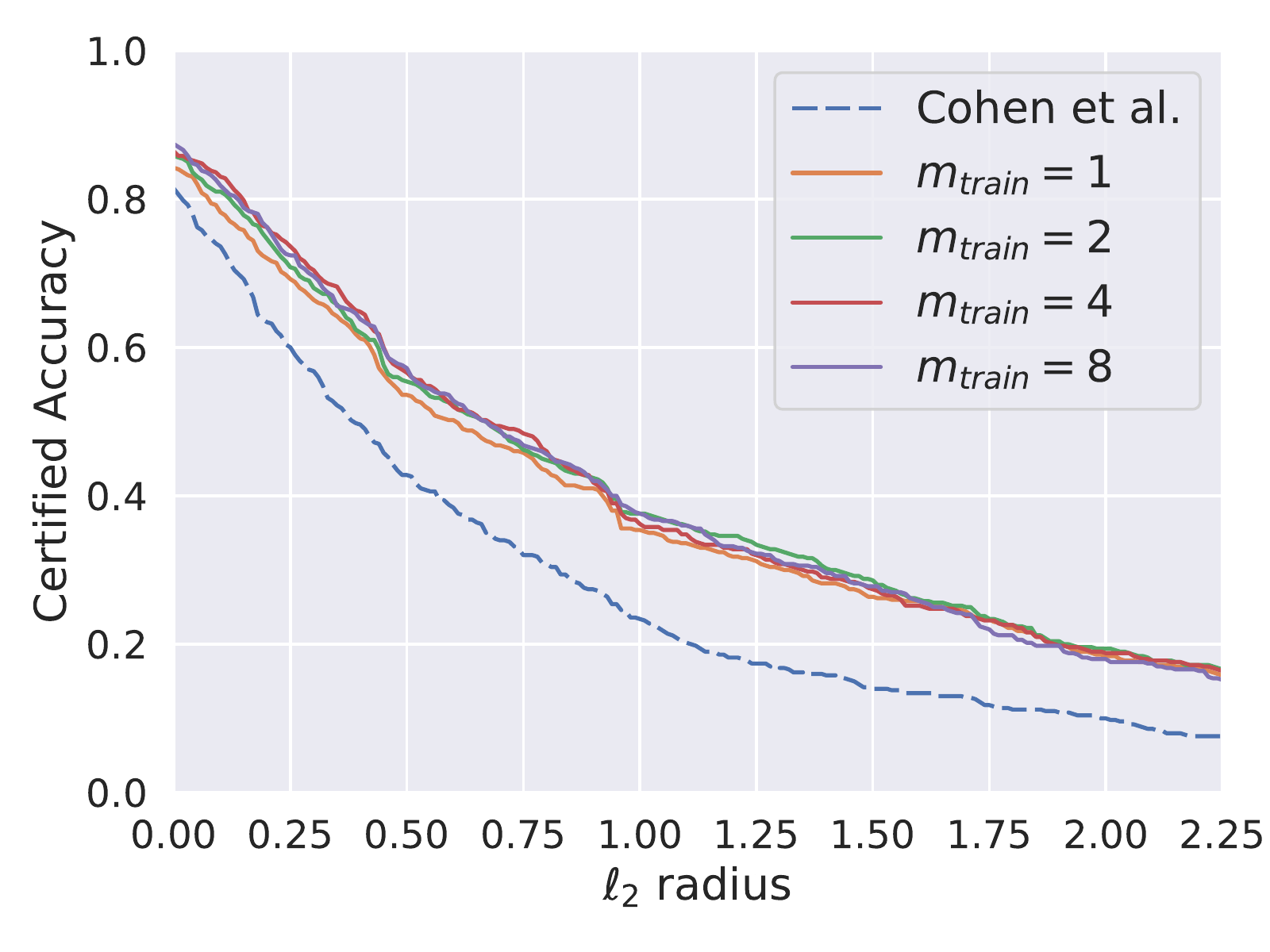}
\caption{Vary number of samples $m_{train}$.}
\label{fig:certified_ours_vary_m}
\end{figure}

\subsection{Effect of $\eps$ during training on the certified accuracy of smoothed classifiers}
\label{appendix:effect_of_eps_on_training}

Here, we analyze the effect of the maximum allowed $\ell_2$ perturbation of \smoothAdv{} during adversarial training on the robustness of the obtained smoothed classifier. Fig.~\ref{fig:vary_eps} plots the best certified accuracies for $\eps \in \{0.25, 0.5, 1.0, 2.0\}$ over models trained using \smoothAdvPGD with $T \in \{2, 4, 6, 8, 10\}$, $m_{train} \in \{1,2,4,8\}$, and $\sigma \in \{0.12, 0.25, 0.5, 1.0\}$. Observe that as  $\eps$ increases, the certified accuracies for small $\ell_2$ radii decrease, but those for large $\ell_2$ radii increase, which is expected.

\begin{figure}[h]
\centering
\includegraphics[width=0.6\textwidth]{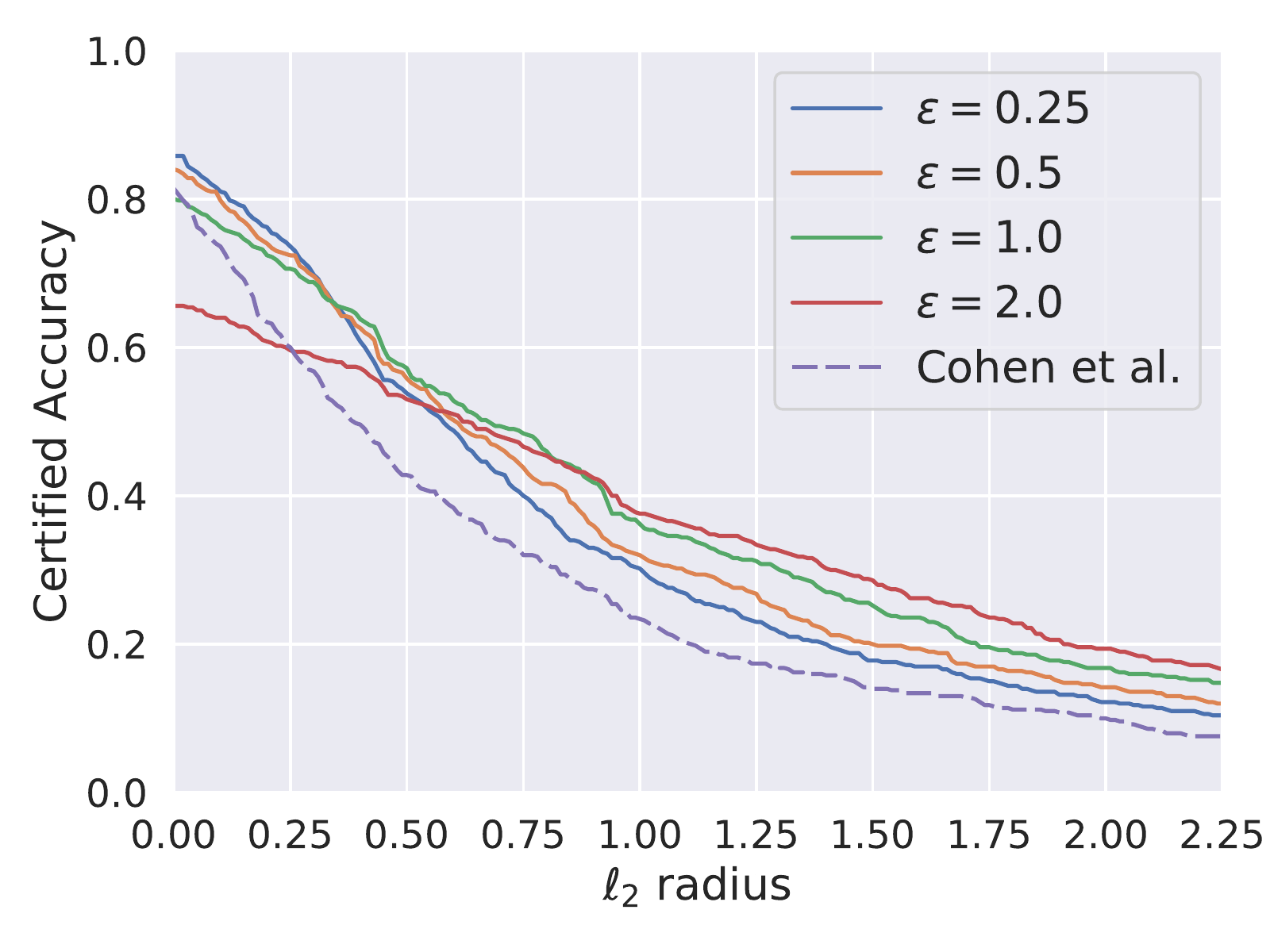}        
\caption{Vary $\eps$. Observe that as  $\eps$ increases, the certified accuracies for small $\ell_2$ radii decrease, but those for large $\ell_2$ radii increase, which is expected.}
\label{fig:vary_eps}
\end{figure}

\subsection{Effect of the number of samples $m_{test}$ in \eqref{eq:plug-in} during \smoothAdv{} attack on the empirical accuracies}
\label{appendix:effect_of_m_on_attack}

\smoothAdvPGD{} requires the evaluation of \eqref{eq:plug-in} as discussed in Section~\ref{sec:first-order}. Here, we analyze how sensitive our attack is to the number of samples $m_{test}$ used in \eqref{eq:plug-in}. Fig.~\ref{fig:empirical_cohen_vary_m_large}  shows the empirical accuracies for various values of $m_{test}$. Lower accuracies correspond to stronger attacks. For $m_{test}=1$, the vanilla PGD attack (attacking the base classifier instead of the smooth classifier) performs better than \smoothAdv{}, but as $m_{test}$ increases, our attack becomes stronger, decreasing the gap between certified and empirical accuracies. We did not observe any noticeable improvement beyond $m_{test}=128$.

\begin{figure*}[t]
\begin{center}
\includegraphics[width=0.7\textwidth]{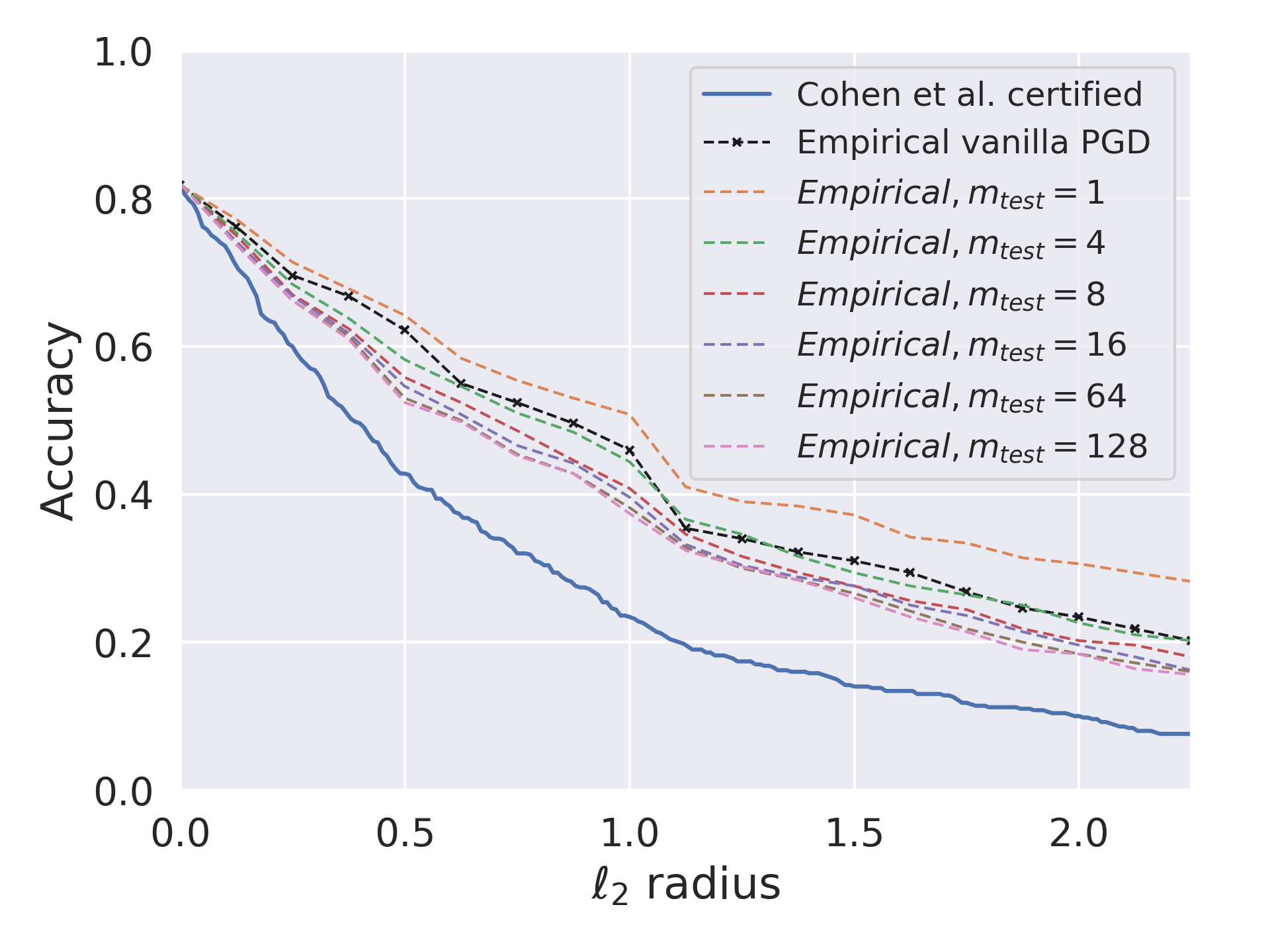}
\caption{\textbf{(A larger version of Fig.~\ref{fig:empirical_cohen_vary_m})} Certified and empirical robust accuracy of \citet{cohen2019certified}'s models on CIFAR-10. For each $\ell_2$ radius $r$, the certified/empirical accuracy is the maximum over randomized smoothing models trained using $\sigma \in \{0.12, 0.25, 0.5, 1.0\}$. The empirical accuracies are found using 20 steps of \smoothAdvPGD{}. The closer an empirical curve is to the certified curve, the stronger the corresponding attack is (the lower the better).}
\label{fig:empirical_cohen_vary_m_large}
\end{center}
\end{figure*}

\subsection{Effect of the number of Monte Carlo samples $n$ in \textsc{Predict} on the empirical accuracies}
\label{appendix:effect_of_n_on_predict}

Fig.~\ref{fig:empirical_cohen_vary_n} plots the empirical accuracies of $g$ using a \smoothAdvPGD{} attack (with $m_{test}=128$) across different numbers of Monte Carlo samples n that are used by \textsc{Predict}. Observe that the empirical accuracies increase as $n$ increases since the prediction quality of the smoothed classifier improves i.e. less predictions are abstained.

\begin{figure}[h]
\centering
\includegraphics[width=0.6\textwidth]{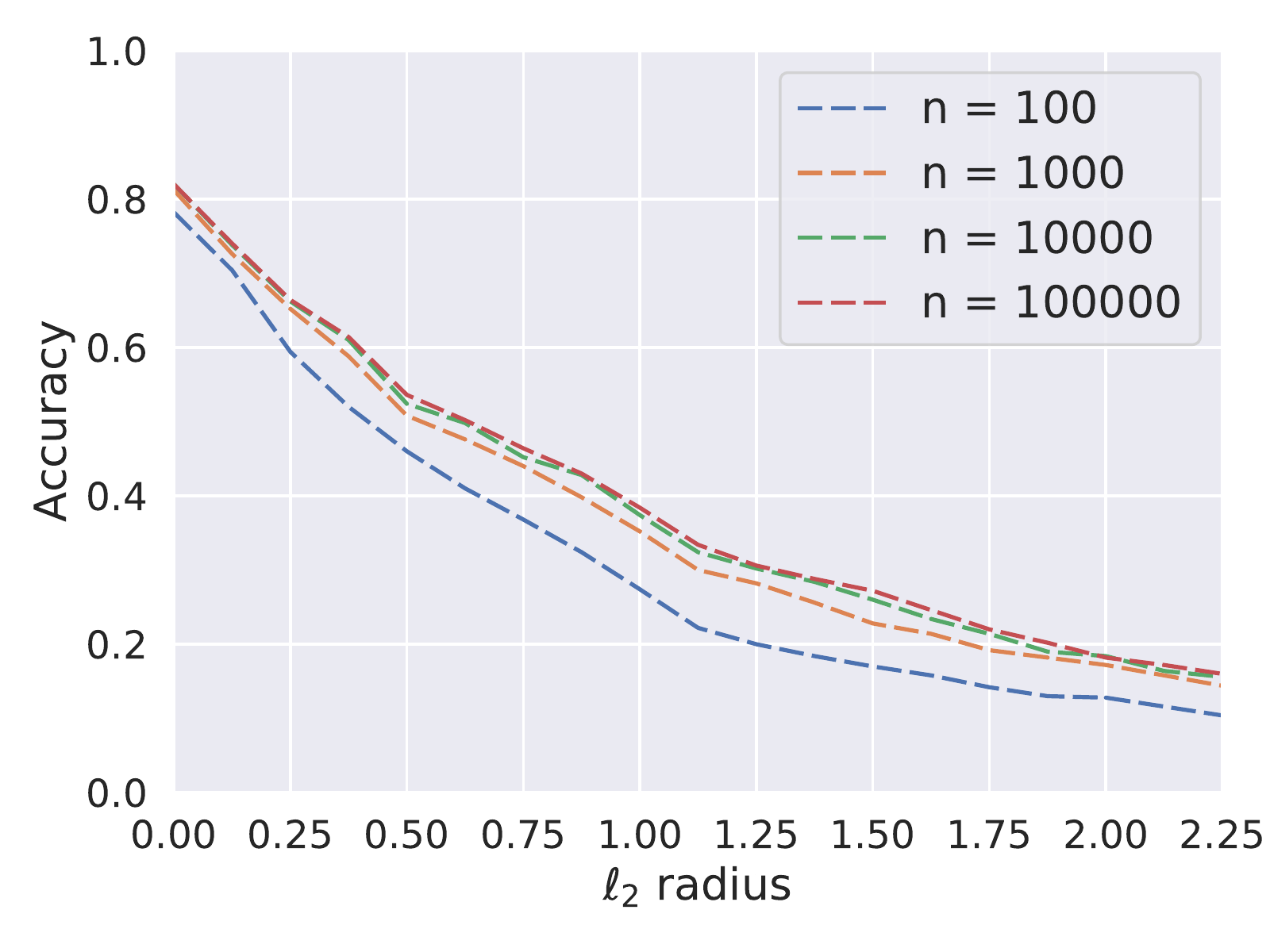}
\caption{Empirical accuracies. Vary number of samples $n$. The higher the better.}
\label{fig:empirical_cohen_vary_n}
\end{figure}

\subsection{Performance of the gradient-free estimator \eqref{eq:grad-free}}\label{appendix:grad-free}

Despite the appealing features of the gradient-free estimator \eqref{eq:grad-free} presented in Section~\ref{sec:gradient-free} as an alternative to \eqref{eq:plug-in}, in practice we find that this attack is quite weak. This is shown in Fig.~\ref{fig:gradient-free} for various values of $m_{test}$.

We speculate that this is because the variance of the gradient estimator is too high. We believe that investigating this attack in practice is an interesting direction for future work.

\begin{figure}[h]
\centering
\includegraphics[width=1.0\textwidth]{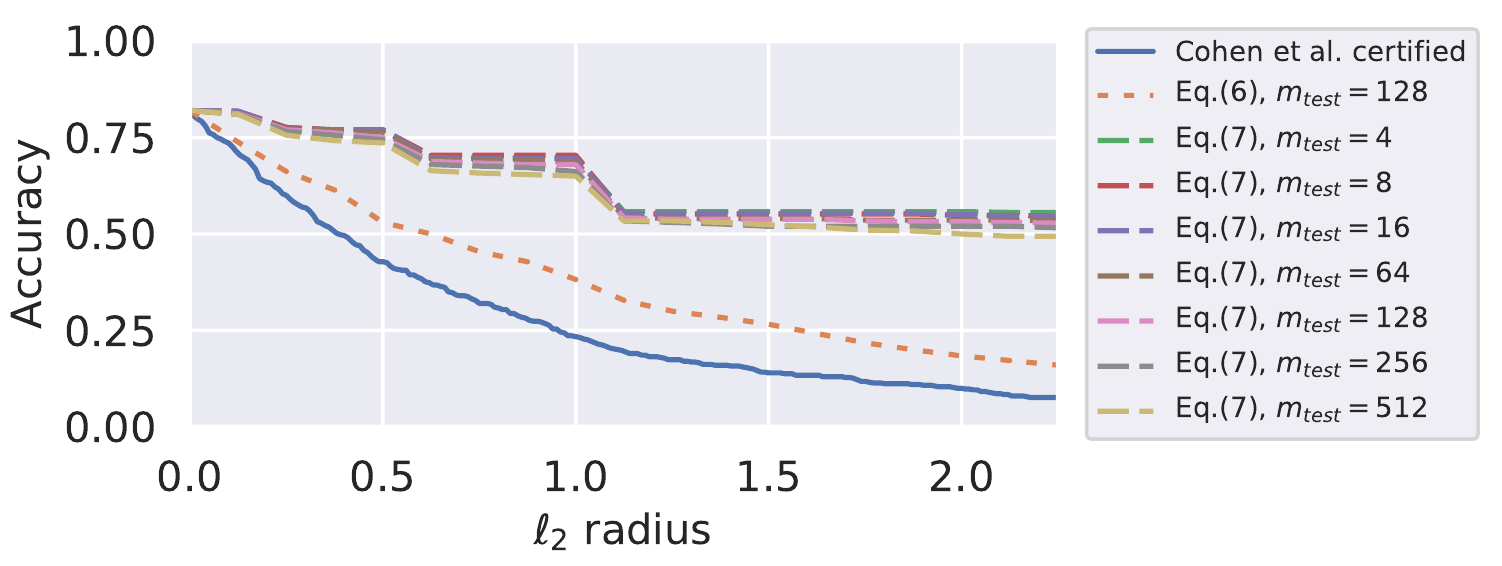}
\caption{The emprirical accuracies found by the attack \eqref{eq:smooth-adv} using the plug-in estimator \eqref{eq:plug-in} vs. the gradient-free estimator \eqref{eq:grad-free}. The closer an empirical curve is to the certified curve, the stronger the attack.}
\label{fig:gradient-free}
\end{figure}

\subsection{Certification Abstention Rate}
In this section, we compare the certification abstention rates of our smoothed models against those of \citet{cohen2019certified}'s models.
Table~\ref{table:abstention_rate} reports the abstention rates for the best models at various $\ell_2$-radii. These are the models corresponding to Table~\ref{cifar-certified-accuracy}.
Our models have a substantially lower abstention rate across all $\ell_2$ radii.

Note that \citet{cohen2019certified} reported the abstention rates for \emph{prediction} (but not certification), which tend to be lower than the certification abstention rates.

\begin{table}[t]
\caption{The certification abstention rate of our best CIFAR-10 classifiers at various $\ell_2$ radii.}
\label{table:abstention_rate}
\begin{center}
\begin{sc}
\begin{tabular}{l | c c c c c c c c c}
\toprule
$\ell_2$ Radius (CIFAR-10)& $0.25$& $0.5$& $0.75$& $1.0$& $1.25$& $1.5$& $1.75$& $2.0$& $2.25$\\
\midrule
\citet{cohen2019certified} (\%) &8.0 & 15.2 & 15.2 & 29.5 & 29.5 & 29.5 & 29.5 & 29.5 & 29.5\\
Ours (\%)& 1.7 & 3.3 & 3.0 & 5.8 & 4.7 & 4.7 & 4.3 & 13.2 & 13.2\\
+ Pre-training (\%) & 0.9 & 2.9 & 2.1 & 3.9 & 3.9 & 3.9 & 3.9 & 10.4 & 10.4\\
+ Semi-supervision (\%) & 1.1 & 2.8 & 2.8 & 6.3 & 6.3 & 4.4 & 4.3 & 11.5 & 11.5\\
+ Both(\%)  & 1.0 & 2.7 & 2.1 & 4.1 & 4.1 & 4.1 & 4.1 & 11.6 & 11.6\\
\bottomrule
\end{tabular}
\end{sc}
\end{center}
\end{table}

\section{Experiments Details}\label{appendix:experiments-details}
Here we include details of all the experiments conducted in this paper.

\paragraph{Attacks used in the paper} 
We use two of the strongest attacks in the literature, projected gradient descent (PGD) \cite{madry2017towards} and decoupled direction and norm (DDN) \cite{rony2018decoupling} attacks. We adapt these attacks such that their gradient steps are given by \eqref{eq:plug-in}, and we call the resulting attacks \smoothAdvPGD{} and \smoothAdvDDN{}, respectively.

For PGD (\smoothAdvPGD{}), we use a constant step size $\gamma = 2\frac{\eps}{T}$ where $T$ is the number of attack steps, and $\eps$ is the maximum allowed $\ell_2$ perturbation of the input.  

For DDN (\smoothAdvDDN{}), the attack objective is in fact different than that of PGD (i.e. different that \eqref{eq:smooth-adv}). DDN tries to find the ``closest'' adversarial example to the input instead of finding the ``best'' adversarial example (in terms of maximizing the loss in a given neighborhood of the input). We stick to the hyperparameters used in the original paper \cite{rony2018decoupling}. We  use $\eps_0 = 1$, $\gamma = 0.05$, and an initial step size $\alpha = 1$ that is reduced with cosine annealing to 0.01 in the last iteration (see \cite{rony2018decoupling} for the definition of these parameters). We experimented with very few iterations ($\{2, 4, 6, 8, 10\}$) as compared to the original paper, but we still got good results.

We emphasize that we are not using PGD and DDN to attack the \textit{base classifer} $f$ of a smoothed model, instead we are using them to adversarially train \textit{smoothed classiers} (see Pseudocode~\ref{pseudocode-smoothadv}).

\paragraph{Training details}
In order to report certified radii in the original coordinates, we first added Gaussian noise and/or do adversarial attacks, and then standardized the data (in contrast to importing a standardized dataset).
Specifically, in our PyTorch implementation, the first layer of the base classifier is a normalization layer that performed a channel-wise standardization of its input.

For both ImageNet and CIFAR-10, we trained the base classifier with random horizontal flips and random crops (in addition to the Gaussian data augmentation discussed in Section~\ref{sec:adv-training}).

The main training algorithm is shown in Pseudocode~\ref{pseudocode-smoothadv}. It has the following parameters: $B$ is the mini-batch size, $m$ is the number of noise samples used for gradient estimation in~(\ref{eq:plug-in}) as  well as for Gaussian noise data augmentation, and $T$ is the number of steps of an attack.

We point out few remarks. 
\begin{enumerate}
    \item First, an important parameter is the radius of the attack $\eps$. During the first epoch, it is set to zero, then we linearly increase it over the first ten epochs, then it stays constant.
    \item Second, we are reusing the same noise samples during every step of our attack as well as augmentation. Intuitively, it helps to stabilize the attack process.
    \item Finally, the way training is described in Pseudocode~\ref{pseudocode-smoothadv} is not efficient; it needs to be appropriately batched so that we compute adversarial examples for every input in a batch at the same time.
\end{enumerate}

\paragraph{Compute details and training time}
On CIFAR-10, we trained using SGD on one NVIDIA P100 GPU. We train for 150 epochs. We use a batch size of 256, and an initial learning rate of 0.1 which drops by a factor of 10 every 50 epochs. Training time varies between few hours to few days, depending on how many attack steps $T$ and noise samples $m$ are used in Pseudocode~\ref{pseudocode-smoothadv}.

On ImageNet we trained with synchronous SGD on four NVIDIA V100 GPUs. We train for 90 epochs. We use a batch size of 400, and an initial learning rate of 0.1 which drops by a factor of 10 every 30 epochs. Training time varies between 2 to 6 days depending on whether we are doing \smoothAdv-ersarial training or just Gaussian noise training (similar to \citet{cohen2019certified}).

\paragraph{Models used}
The models used in this paper are similar to those used in \citet{cohen2019certified}: a ResNet-50 \citep{he2016deep} on ImageNet, and ResNet-110 on CIFAR-10. These models can be found on the github repo accompanying \cite{cohen2019certified} \url{https://github.com/locuslab/smoothing/blob/master/code/architectures.py}.

\paragraph{Parameters of \textsc{Certify} amd \textsc{Predict}}
For details of these algorithms, please see the \textit{Pseudocode} in \cite{cohen2019certified}.

For \textsc{Certify}, unless otherwise specified, we use $n=100,000$, $n_0=100$, $\alpha = 0.001$.

For \textsc{Predict}, unless otherwise specified, we use $n=100,000$ and $\alpha =0.001$.

\paragraph{Source code}
Our code and trained models are publicly available at \url{http://github.com/Hadisalman/smoothing-adversarial}.
The repository also includes all our training/certification logs, which enables the replication of all the results of this paper by running a single piece of code. Check the repository for more details.

\section{Details for Pre-training and Semi-supervision to Improve the Provable Robustness}

\subsection{Pre-training}\label{appendix:pretraining}
In this appendix, we describe the details of how we employ pre-training within our framework to boost the certified robustness of our models. We pretrain smoothed classifiers on a 32x32 down-sampled version of ImageNet (\href{http://www.image-net.org/small/download.php}{ImageNet32}) as done by \citet{hendrycks2019using}. Then  we fine-tune all the weights of these  models on CIFAR-10 (with the 1000-dimensional logit layer of each model replaced by a randomly initialized 10-dimensional logit layer suitable for CIFAR-10).

\paragraph{ImageNet32 training} We train ResNet-110 architectures on ImageNet32 using SGD on one NVIDIA P100 GPU. We train for 150 epochs. We use a batch size of 256, and an initial learning rate of 0.1 which drops by a factor of 10 every 50 epochs. We use  \smoothAdvPGD{} with $T = 2$ steps and  $m_{train}=1$ noise samples. We train a total of 16 models each corresponding to a choice of $\sigma \in \{0.12, 0.25, 0.5, 1.0\}$ and $\eps \in \{0.25, 0.5, 1.0, 2.0 \}$.

\paragraph{Fine-tuning on CIFAR-10} For each choice of $\sigma$ and $\eps$, we fine tune the corresponding ImageNet32 model on CIFAR-10; we replace the 1000-dimensional logit layer of each model with a randomly initialized 10-dimensional logit layer suitable for CIFAR-10, then we train for 30 epochs with a constant learning rate of 0.001 and a batch size of 256. We use  \smoothAdvPGD{} with $T \in \{ 2,4,6,8,10\}$ and  $m_{train} \in \{1,2,4,8\}$.

\subsection{Semi-supervised Learning}\label{appendix:semisupervision}
In this appendix, we detail how we employ semi-supervised learning \cite{carmon2019unlabeled} within our framework to boost the certified robustness of our models.

We train our CIFAR-10 smoothed classifiers via the \emph{self-training} technique of \cite{carmon2019unlabeled} using their 500K unlabelled dataset.
We equip this dataset with pseudo-labels generated by a standard neural network trained on CIFAR-10, as in \cite{carmon2019unlabeled}; see \cite{carmon2019unlabeled} for more details\footnote{The 500K unlabelled dataset was not public at the time this paper was written. We obtained it, along with the pseudo-labels, from the authors of \cite{carmon2019unlabeled}. We refer the reader to the authors of \cite{carmon2019unlabeled} to obtain this dataset if interested in replicating our self-training results. }.

Self-training a smoothed classifier works as follows:
at every step we randomly sample either a labelled minibatch from CIFAR-10, or a pseudo-labelled minibatch from the 500K dataset:
\begin{enumerate}
    \item for a labelled minibatch, we follow Pseudocode~\ref{pseudocode-smoothadv} as is.
    \item for a pseudo-labelled minibatch, we scale the CE loss by a factor of $\eta \in \{0.1, 0.5, 1.0\}$ and we follow the rest of Pseudocode~\ref{pseudocode-smoothadv}.
\end{enumerate}

We use \smoothAdvPGD{} with $T \in \{ 2,4,6,8,10\}$, $m_{train} = 1 $, $\sigma \in \{0.12, 0.25, 0.5, 1.0\}$, and $\eps \in \{0.25, 0.5, 1.0, 2.0 \}$.

\subsection{Semi-supervised Learning with Pre-training}\label{appendix:semisupervision_and_pretraining}
We also experiment with combining semi-supervised learning with pre-training in the hopes of obtaining further improvements.
We start from the same ResNet-110 models pretrained on  ImageNet32 as in Appendix~\ref{appendix:pretraining}. Then we finetune these models using semi-supervision, as in Appendix~\ref{appendix:semisupervision}, for 30 epochs with a learning rate of 0.001.
We use \smoothAdvPGD{} with $T \in \{ 2,4,6,8,10\}$, $m_{train} = 1 $, $\sigma \in \{0.12, 0.25, 0.5, 1.0\}$, and $\eps \in \{0.25, 0.5, 1.0, 2.0 \}$.

\section{$\ell_{2}$ to $\ell_{\infty}$ Certified Defense on ImageNet}\label{appendix:l2_to_linf}
We find our $\ell_2$-robust ImageNet models enjoy non-trivial $\ell_{\infty}$ certified robustness. In~Table~\ref{table:certified_linf_imagenet}, we report the best $\ell_{\infty}$ certified accuracy that we get at a radius of 1/255 (implied by the $\ell_{2}$ certified accuracy at a radius of $1.5 \approx \sqrt{3\times 224^2} / 255$).
We exceed previous state-of-the-art in certified $\ell_{\infty}$ defenses by around $8.2\%$.
\begin{table}[h]
\caption{Certified $\ell_{\infty}$ robustness at a radius of $\frac{1}{255}$ on ImageNet.}
\label{table:certified_linf_imagenet}
\begin{center}
\begin{sc}
\begin{tabular}{l|cc}
Model                 & $\ell_{\infty}$ Acc. at $1/255$ & Standard Acc. \\
\midrule
Ours (\%)                        &           \textbf{38.2}                &   54.6    \\
\citet{cohen2019certified} (\%) &            28.6              &     \textbf{57.2}
\end{tabular}
\end{sc}
\end{center}
\end{table}

\clearpage
\section{ImageNet and CIFAR-10 Detailed Results}
\label{appendix:detialed_certification_results}
In this appendix, we include the certified accuracies of each mode that we use in the paper. For each $\ell_2$ radius, we highlight the best accuracy across all models. Note that we outperform the models of \citet{cohen2019certified} (first three rows of each table) over all $\ell_2$ radii by wide margins.
\begin{table}[h]
\caption{Approximate certified test accuracy on ImageNet.  Each row is a setting of the hyperparameters $\sigma$ and $\epsilon$, each column is an $\ell_2$ radius.  The entry of the best $\sigma$ for each radius is bolded. For comparison, random guessing would attain 0.001 accuracy.}
\label{table:imagenet_certify_results}
\begin{center}
\begin{small}
\begin{sc}
% [inline block 0: 12 envs, 63820 chars -> data_tex | \begin{tabular}{c  c | c c c c c c c c c c}  \toprule...]

\end{sc}
\end{Large}
\end{center}
\end{table}

\end{landscape}

\end{document}